\documentclass{article}

% if you need to pass options to natbib, use, e.g.:
%     \PassOptionsToPackage{numbers, compress}{natbib}
% before loading neurips_2024

% ready for submission
% \usepackage{arxiv/neurips_2024}

% to compile a preprint version, e.g., for submission to arXiv, add add the
% [preprint] option:
\usepackage[preprint]{neurips_2024}

% to compile a camera-ready version, add the [final] option, e.g.:
%     \usepackage[final]{neurips_2024}

% to avoid loading the natbib package, add option nonatbib:
%    \usepackage[nonatbib]{neurips_2024}

\usepackage[utf8]{inputenc} % allow utf-8 input
\usepackage[T1]{fontenc}    % use 8-bit T1 fonts
\usepackage{hyperref}       % hyperlinks
\usepackage{url}            % simple URL typesetting
\usepackage{booktabs}       % professional-quality tables
\usepackage{amsfonts}       % blackboard math symbols
\usepackage{nicefrac}       % compact symbols for 1/2, etc.
\usepackage{microtype}      % microtypography
\usepackage{xcolor}         % colors
\usepackage{graphicx}
\usepackage{wrapfig}
\usepackage{listings}
\usepackage{caption}
\usepackage{subcaption}
\usepackage{ulem}
\usepackage{amsmath}
\usepackage{bbding}
\usepackage{amsthm}  % For theorem environments
\newtheorem{theorem}{Theorem}  % If not already defined

\usepackage{siunitx}
\usepackage{multirow} 

\usepackage[capitalize]{cleveref} % For \cref (cross-references)
\captionsetup[figure]{labelfont=bf}

% 自定义代码样式
\lstset{
  language=Python,
  basicstyle=\ttfamily\footnotesize,
  keywordstyle=\color{purple}\bfseries,
  commentstyle=\color{green!40!black},
  stringstyle=\color{black},
  showstringspaces=false,
  breaklines=true,
  numbers=left,
  numberstyle=\tiny\color{gray},
  stepnumber=1,
  numbersep=5pt,
  frame=single,
  tabsize=4,
  captionpos=b,
  escapeinside={(*@}{@*)},
  xleftmargin=2em,
  framexleftmargin=1.5em
}

\usepackage[linesnumbered,ruled,vlined]{algorithm2e}
\graphicspath{{Figs/}}

\newcommand{\methodname}{Coca-Splat}
\newcommand{\camdfa}{CaMDFA}
\newcommand{\rayref}{RefRay}
\title{Coca-Splat: Collaborative Optimization for Camera Parameters and 3D Gaussians}

% The \author macro works with any number of authors. There are two commands
% used to separate the names and addresses of multiple authors: \And and \AND.
%
% Using \And between authors leaves it to LaTeX to determine where to break the
% lines. Using \AND forces a line break at that point. So, if LaTeX puts 3 of 4
% authors names on the first line, and the last on the second line, try using
% \AND instead of \And before the third author name.

\author{%
Jiamin Wu$^{1,2}$\thanks{This work is done during an internship in the International Digital Economy Academy (IDEA).}~~, Hongyang Li $^{2}$~~, Xiaoke Jiang$^2$\thanks{Corresponding author}~~, Yuan YAO$^1$~~, Lei Zhang$^2$\\
$^1$Hong Kong University of Science and Technology \\
$^2$International Digital Economy Academy (IDEA)\\
} 
  % examples of more authors
  % \And
  % Coauthor \\
  % Affiliation \\
  % Address \\
  % \texttt{email} \\
  % \AND
  % Coauthor \\
  % Affiliation \\
  % Address \\
  % \texttt{email} \\
  % \And
  % Coauthor \\
  % Affiliation \\
  % Address \\
  % \texttt{email} \\
  % \And
  % Coauthor \\
  % Affiliation \\
  % Address \\
  % \texttt{email} \\

\begin{document}

\maketitle

\begin{abstract}
In this work, we introduce \textbf{{\methodname}}, a novel approach to addressing the challenges of sparse view pose-free scene reconstruction and novel view synthesis (NVS) by jointly optimizing camera parameters with 3D Gaussians.
Inspired by deformable DEtection TRansformer, we design separate queries for 3D Gaussians and camera parameters and update them layer by layer through deformable Transformer layers, enabling joint optimization in a single network. This design demonstrates better performance because to accurately render views that closely approximate ground-truth images relies on precise estimation of both 3D Gaussians and camera parameters. In such a design, the centers of 3D Gaussians are projected onto each view by camera parameters to get projected points, which are regarded as 2D reference points in deformable cross-attention. With camera-aware multi-view deformable cross-attention (\textbf{{\camdfa}}), 3D Gaussians and camera parameters are intrinsically connected by sharing the 2D reference points. Additionally, 2D reference point determined rays (\textbf{RayRef}) defined from camera centers to the reference points assist in modeling relationship between 3D Gaussians and camera parameters through RQ-decomposition on an overdetermined system of equations derived from the rays, enhancing the relationship between 3D Gaussians and camera parameters.
Extensive evaluation shows that our approach outperforms previous methods, both pose-required and pose-free, on RealEstate10K and ACID within the same pose-free setting.

\end{abstract}

\section{Introduction} \label{sec:intro}
% background
Sparse view scene reconstruction, novel view synthesis (NVS), and input view pose estimation represent crucial challenges in computer vision with wide-ranging applications in robotics, augmented reality (AR), virtual reality (VR), games, and films. Recent advancements in differentiable rendering techniques such as Neural Radiance Fields (NeRF) \citep{nerf, nerf++} and 3D Gaussian Splatting (3DGS) \citep{3d-gs, 3d_gs_survey} have enabled rapid progress in scene reconstruction and NVS using deep learning methods, relying solely on 2D supervision due to difficulty in obtaining 3D assets for supervision. Recent methods like pixelSplat \citep{pixelsplat} and MVSplat \citep{mvsplat} take advantage of fast rendering speed of 3DGS to achieve precise scene reconstruction and NVS within approximately one second. However, these methods require ground truth camera poses for input views, posing challenges in real-world scenarios.
While these poses can be derived from dense videos using structure-from-motion (SfM) methods \citep{colmap, sfm1, sfm2}, inaccurate pose estimation may impact performance and increase inference time.
\begin{figure}[t]
    \centering
    \includegraphics[width=0.77\textwidth]{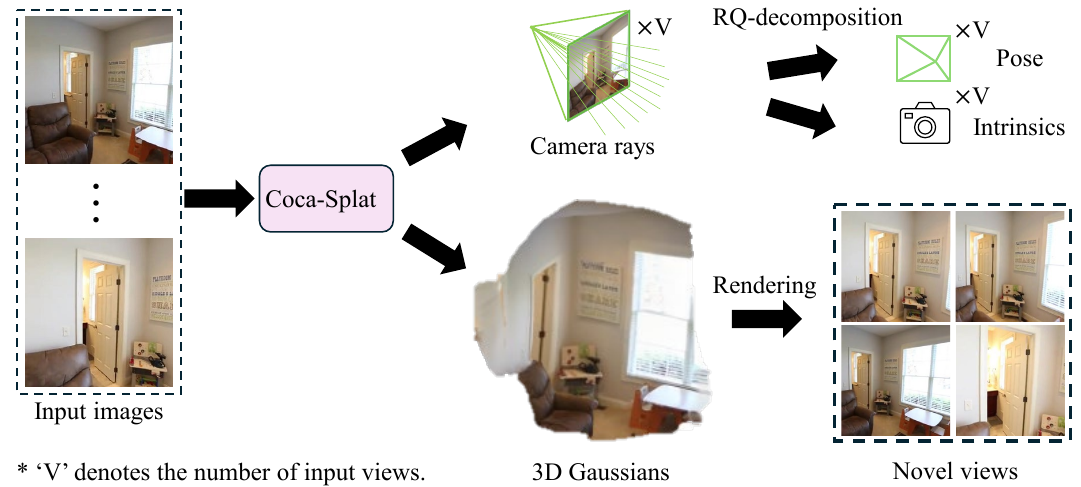}
    
    \caption{\textbf{{\methodname}}. Given sparse unposed images, our method reconstructs 3D Gaussians and camera rays using a feed-forward network. Subsequently, camera parameters are derived from camera rays, and novel views are rendered from 3D Gaussians.}
    
    \label{fig: teaser}
\end{figure}

% existing work and limitation
Recent methods focusing on pose-free scene reconstruction are increasingly suitable for real-world applications, falling into two main categories.
The first category involves pose-free scene reconstruction and NVS methods like \cite{noposplat, sparp, dust3r, mast3r, splatt3r, pf-lrm, gsloc} which do not explicitly incorporate pose information in the network architecture. Instead, they generate scenes based on image feature matching and comparison while utilizing post-processing techniques, such as Perspective-n-Point (PnP) algorithm \citep{sfm1}, for calculating input poses. The second category includes methods like \cite{forge, nope-nerf, selfsplat, dbarf, flowcam, unifying} which employ separate pose estimators for pose estimation and 3D estimators for scene reconstruction but obtain pose and scene information in a unified pipeline. Some of these methods combine predicted pose and 3D information to optimize re-projection error or rendered RGB error for enhanced results. 

However, the separation of camera parameter estimation and scene reconstruction in such approaches often leads to suboptimal results since only precise reconstructions of both 3D scenes and camera parameters can result in novel views that closely approximate ground truth. This separation not only lengthens the pipeline but also introduces a compounding effect: errors in camera parameters estimation can deteriorate scene reconstruction, leading to additional inaccuracies in camera parameters estimation \citep{noposplat}. 

% our work high level
In this study, we introduce \textbf{{\methodname}} to jointly estimate camera camera parameters of input views and synthesize novel views for scenes by directly connecting camera parameters with 3D Gaussians within a single network as shown in \cref{fig: teaser}.
%To establish a direct connection between camera poses and 3D Gaussians, a fundamental approach involves projecting the 3D Gaussians onto input views using the corresponding camera poses. 
Inspired by GeoLRM \citep{geolrm}, LeanGaussian \citep{wu2024dig3d} and UNIG \citep{unig}, which leverage the Deformable DEtection TRansformer (Deformable DETR) \citep{DETR} framework in 3D, we introduce separate queries for 3D Gaussians and camera parameters (3D queries and camera queries in short). The 3D queries are utilized to generate 3D Gaussians by multi-layer perceptron (MLP), while the camera queries are employed to regress camera extrinsics and intrinsics using our introduced \textbf{{\rayref}} (2D reference point determined rays) representation on the camera queries. 3D Gaussians (center of them) are then projected onto each view using camera parameters, and the projected points are regarded as 2D reference points in deformable cross-attention. Subsequently, both 3D and camera queries are updated by deformable cross-attention with image features surrounding these 2D reference points. The above process is called  \textbf{{\camdfa}} (camera-aware multi-view deformable cross-attention), which efficiently establishes a connection between 3D Gaussians and camera parameters through 2D reference points, updating them effectively.

Although the model effectively optimizes 3D Gaussians and camera parameters jointly, a \textbf{challenge} arises when regressing camera parameters directly from numerous parameters (resulting from multiplication of the number of reference points by hidden dimension). This approach may not be optimal for neural learning, which typically benefits from over-parameterized distributed representations \citep{rays}. Inspried by Ray-Diffusion \citep{rays}, which represents camera parameters as rays from camera center to each pixel on a image, we design \textbf{{\rayref}}, modeling camera parameters as rays from camera center to the 2D reference points on each input view and design an overdetermined equation system using equations of the rays to solve camera parameters. With the aforementioned design, our model solves both camera extrinsics and intrinsics through RQ-decomposition, enabling scene reconstruction without the need for any input view camera parameters.

With the aforementioned design, our model effectively establishes a direct connection between 3D Gaussians and camera parameters through a \textbf{shared set} of 2D reference points, enabling joint optimization. The collaboration of 3D Gaussians and camera parameters plays an important role in the 3D Gaussian reconstruction and NVS outcomes. 
% Only when both predictions are accurate can rendered views closely approximate ground-truth RGB images.

%% Contribution
In summary, our contributions are as follows:
\begin{itemize}
    \item We propose a novel approach to modeling 3D Gaussians and camera poses of input views jointly without any pre-processing or post-processing steps, making scene reconstruction without any input view camera parameters.
    \item We establish a direct connection between 3D Gaussians and camera parameters through 2D reference points and using \textbf{{\camdfa}} and \textbf{{\rayref}} to optimize both 3D Gaussians and camera parameters possible. 
    \item Our method outperforms previous methods, both pose-required and pose-free, on RealEstate10K (RE10K) \citep{re10k} and ACID \citep{ACID} within the same pose-free setting.
\end{itemize}

% Additionally, projecting onto 3D representations in continuous space enhances accuracy, while sampling points on 2D images with consideration for view discrepancies may yield more precise features.
\section{Related Work} \label{sec: related_works}
\begin{figure*}
    \centering
    \includegraphics[width=0.9\textwidth]{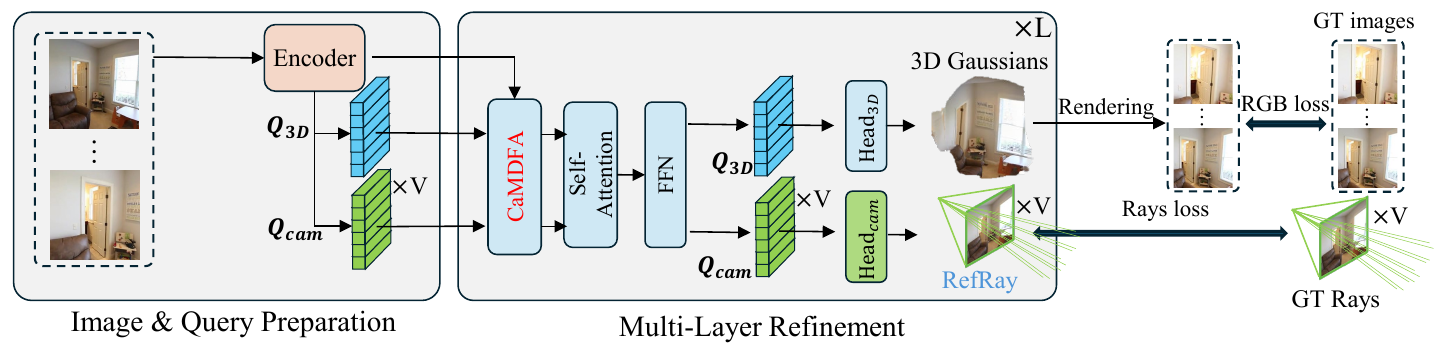}
    
    \caption{Overall Framework: Our model employs an encoder-decoder architecture to simultaneously reconstruct 3D Gaussians and camera parameters. The Vision Transformer (ViT) encoder processes all input images, with the resulting image features serving as keys and values in \textcolor{red}{{\camdfa}} block (\cref{sec: dfa}) within the decoder layer. In our approach, define queries for 3D Gaussians and camera queries separately, incorporating them into the deformable decoder layer alongside the image features. Following $L$ decoder layers, the model produces queries for 3D Gaussians and cameras, which are used to go through a fully connected network (FFN) operation to regress the 3D Gaussians and \textcolor{blue}{{\rayref}} (in Pl\"{u}cker coordinates \citep{plucker} denoting the direction and moment \cref{sec: rays2cam}) that points from camera center to reference points. Subsequently, the predicted 3D Gaussians undergo rendering via Gaussian Splatting \citep{3d-gs} to generate the novel views while the rays are utilized to solve the camera parameters in \cref{sec: rays2cam}.}
    
    \label{fig: overview}
\end{figure*}
\noindent\textbf{Pose required scene reconstruction NVS}
The field of deep learning-based NVS has seen rapid advancement recently, driven by differentiable rendering techniques like NeRF \citep{nerf} and 3DGS \citep{3d-gs}. Subsequent methods, such as \cite{nerfsdf, mipnerf, fastnerf, meshnerf, pixelnerf, instantnerf, instant3d, instantmesh, triplane-gs, gslrm, geolrm, LGM, wonder3d, mvsplat, pixelsplat, wu2024dig3d, unig, grm, nice, nicer, gsslam}, have introduced novel deep learning models for this task. While some approaches take a single-view image as input but lack finer details, others accept multi-view images but necessitate ground truth camera poses, limiting real-world applicability. Methods like LGM and InstantMesh \citep{LGM, instantmesh} leverage single-view inputs and utilize diffusion models \citep{stablediffusion, imagedream, mvdream} to generate fixed views (e.g., front, right, back, left) before applying their models, exploiting the ease of obtaining poses for these predetermined views. However, the generated views may introduce uncertainties and distortions compared to real-world scenarios due to the reliance on diffusion models. Some methods leverage off-the-shelf SfM techniques \citep{colmap, sfm1, sfm2} to estimate input view poses. Nevertheless, this approach demands a denser array of views to achieve comparatively accurate estimations, and also increase the complexity of the process.

\noindent\textbf{Pose-free scene reconstruction and NVS}
\cite{reconx} do not need pose input for scene NVS but instead require per-scene optimization.
Pose-free methods without per-scene optimization can generally be categorized into two groups. The first group comprises methods \citep{nerf--, dbarf, posefree, sparp, unifying, flowcam, nope-nerf, mip, PF3PLAT, scene_transformer} that optimize pose and scene simultaneously. However, these methods typically employ separate networks to estimate them, often lacking direct connections between them or relying on initializations from other pose and scene prediction models. This can lead to complex pipelines and a lack of explicit connections between pose and scene. The second category includes methods \citep{dust3r, mast3r, noposplat, splatt3r, monst3r} which initially reconstruct the scene in a canonical space  without explicit pose input, rely on point or feature matching for scene reconstruction. Subsequently, the PnP algorithm is used to compute camera poses by matching points reconstructed from different input views, requiring sufficient overlap between input views and do pose-processing for pose estimation. Techniques for object reconstruction \citep{pf-lrm, forge, leap} often involve lifting each pixel to a 3D Gaussian or relying on voxel representations, limiting input resolution and efficiency. In contrast, our model optimizes scene and pose jointly without additional pipelines.
The comparison between our method and representative previous methods is in \cref{tab: method_comparison_simple} with more methods comparison in \cref{app: method_comparison} \cref{tab:method_comparison}.

\noindent\textbf{Model-free pose estimation}
Methods like \cite{relpose, relpose++, rays, FoundationPose, posediffusion, BundleSDF} directly regress poses from input images without 3D models. Many of these techniques leverage tools such as cross-attention and cost-volume to compare variations and similarities between different views and utilize the matching outcomes to estimate camera poses. However, lacking the assistance of 3D models, these methods are generally less effective compared to those incorporating 3D models. Some methods \citep{nocs, nocsformer} utilize Normalized Object Coordinate Space (NOCS) for pose estimation. However, the reliance on well-aligned datasets that defines the positive direction for each object restricts its applicability.

\begin{table}[!htp]
\centering

\caption{Comparison with representative methods. `Pose-Free' means that the model is pose-free, `Jointly Opt' means that the model optimize camera and 3D jointly, `Cano. Space' means that the model define 3D points in the canonical space for all views, `Intri.-Free' means that the model does not require intrinsic input, `3D GS' means the model use 3D Gaussians as 3D representation.}

\label{tab: method_comparison_simple}

\begin{tabular}{lccccc}
\toprule
 Method & Pose-Free & Jointly Opt. & Cano. Space & Intri.-Free & 3D GS \\
\midrule
MVSplat & \XSolidBrush & \XSolidBrush & \XSolidBrush & \XSolidBrush & \Checkmark\\
CoPoNeRF & \Checkmark & \Checkmark & \Checkmark & \XSolidBrush & \XSolidBrush \\
SelfSplat & \Checkmark & \XSolidBrush & \XSolidBrush & \XSolidBrush & \Checkmark\\
NoPoSplat & \Checkmark & \XSolidBrush & \Checkmark & \XSolidBrush & \Checkmark\\
Ours  & \Checkmark & \Checkmark & \Checkmark & \Checkmark & \Checkmark \\
\bottomrule

\end{tabular}

\end{table}
\section{Methods}
The details of our proposed method are given in this section. First, the preliminaries for 3D GS are briefly described in \cref{sec: 3d-gs}. Then, an overview of {\methodname} is in \cref{sec: overview}.
% including preparation for image by ViT encoder \citep{vit}, query preparation of 3D Gaussians and camera parameters, and multi-layer refinement (see \cref{sec: overview}).
After that, Camera-aware Multi-view DeFormable cross-Attention ({\camdfa}) module, the core contribution of {\methodname} is described in \cref{sec: dfa} in detail.
% to update both queries and regress 3D Gaussians and camera parameters.  
% Moreover, we define the representation of camera rays from camera center to reference points in Pl\"{u}cker coordinates \citep{plucker} (direction and moments in $\langle \textbf{d}, \textbf{m} \rangle$) as {\rayref} (see \cref{sec: rays2cam}). 
Moreover, the definition of 2D refernce point determined rays {\rayref}, which is proposed to represent the camera parameters in {\methodname}, is introduced in \cref{sec: rays2cam}.
Finally, the training objective is introduced in \cref{sec: loss}.
% and comparison between our method and previous approaches (see \cref{app: method_comparison} \cref{tab:method_comparison}).

% \jm{rm: In this section, we introduce our proposed model. \cref{sec: 3d-gs} provides background information on 3D GS. Subsequently, our view-consistent 3D reconstruction framework is detailed in \cref{sec: our model}.}

\subsection{Prelimenaries of 3D GS} \label{sec: 3d-gs}

%3D GS \citep{3d-gs} is a differentiable rendering method that render 3D object to RGB images to help 2D supervision and novel view synthesis.
3D GS \citep{3d-gs} is a rendering method that utilize 3D Gaussians to represent 3D objects or scenes. 
In this context, a scene which consists of $N$ 3D Gaussians can be represented as a set of parameters $\{\mathrm{\textbf{SH}}, \boldsymbol{\mu}, \boldsymbol{\sigma}, \mathbf{R}, \mathbf{S}\}$.
The color of a 3D Gaussian ellipsoid is represented by spherical harmonics (\textbf{SH}) $\in \mathbb{R}^{N \times k}$ \citep{sh, 3d-gs}, with degree of freedom $k$, while the geometry is described by position $\boldsymbol{\mu} \in \mathbb{R}^{N \times 3}$, shape (including rotation $\mathbf{R} \in \mathbb{R}^{N \times 4}$ in quternion and scales $\mathbf{S} \in \mathbb{R}^{N \times  3}$), and opacity $\boldsymbol{\sigma} \in \mathbb{R}^N$ \citep{EWASplatting, 3d-gs}. Therefore, the objective of 3D Gaussian reconstruction is to estimate the aforementioned parameters for 3D Gaussians.

\subsection{Overview of {\methodname}} \label{sec: overview}
% \subsubsection{Overview}
\noindent\textbf{Image feature preparation.} \label{sec: encoder}
Following \cite{dust3r, mast3r, noposplat}, we utilize the Vision Transformer (ViT) \citep{vit} to extract multi-view image features from input images. The input images are first patchified and flattened to sequences of image tokens, which are then sent into ViT layers to derive the multi-view image features $\textbf{F}$. To expedite convergence, we initialize our ViT encoder based on NoPoSplat\cite{noposplat}.

\noindent\textbf{Query preparation for 3D Gaussians and camera parameters.} \label{sec: query_prepare}
As mentioned in \cref{sec:intro}, our model define separate queries for 3D Gaussians and camera parameters as 3D queries $\textbf{Q}_{3D} \in \mathbb{R}^{N \times D}$ and camera queries $\textbf{Q}_{cam} \in \mathbb{R}^{V \times N \times D}$ respectively, where $N$ denotes the number of queries, which is also the number of 3D Gaussians, V is the number of input views. Note that both quereis share the same locations as the centers of 3D Gaussians and therefore share the same set of 2D reference points after projection. The 3D Gaussian queries' corresponding 3D Gaussian parameters $\textbf{G} \in \mathbb{R}^{N \times K}$ are obtained by conducting an MLP on $\textbf{Q}_{3D}$, where $K$ denotes the dimention of 3D Gaussian parameters. The camera queries' corresponding {\rayref} $\boldsymbol{\cal{R}} \in \mathbb{R}^{V \times N \times 6}$, which is the intermediate representation of camera parameters (see \cref{sec: rays2cam}), are obtained by conducting MLP on $\textbf{Q}_{cam}$. 
% Therefore, $\textbf{Q}_{3D}$ and $\textbf{G}$ have one-to-one correspondence while $\textbf{Q}_{cam}$ and $\boldsymbol{\cal{R}}$ have one-to-one correspondence. 
The 3D quereis are initialized from the flattening image features from ViT (see \cref{app: init} for more details). The camera queries are randomly initialized as learnable vectors.
% , while the camera intrinsics $\textbf{K}_{init}$ and extrinsics $\boldsymbol{\pi}_{init}$ are initialized as identity matrices. After that, the initial {\rayref} ${\cal R}_{init}$ (see \cref{sec: rays2cam}) is derived by  $\textbf{K}_{init}$ and $\boldsymbol{\pi}_{init}$.

% We designate the first input view as the reference view and predict relative camera parameters with respect to that view; hence, the extrinsic matrix of the first input view is an identity matrix. The camera queries are solely defined for non-reference views in this framework.

\noindent\textbf{Multi-layer refinement.} \label{sec: decoder}
As shown in \cref{fig: overview}, within the multi-layer refinement, our method focuses on the optimization of 3D Gaussians $\textbf{G}$ and camera {\rayref} $\boldsymbol{\cal{R}}$ in each layer. As shown in \cref{fig: overview}, 3D queries and camera queries first pass through a camera-aware multi-view deformable cross-attention ({\camdfa}) block, where the multi-view image features from the ViT encoder contribute as keys and values, as detailed in \cref{sec: dfa}. Subsequently, the updated 3D queries and camera queries conduct self-attention individually. In particular, to mitigate memory costs, we implement a spatially efficient self-attention strategy inspired by UniG\citep{unig} (for more details, please refer to \cref{app: self_attn}. After that, the queries are updated by feed-forward networks (FFNs). The progress for updating queries in the $l$-th layer can be formulated as
\begin{gather} \label{eq: decoder_layer}
    \mathbf{Q}_{3D}^{l}, \mathbf{Q}_{\text{cam}}^{l} = \mathrm{{\camdfa}}(\mathbf{Q}_{3D}^{l-1}, \mathbf{Q}_{\text{cam}}^{l-1}, \mathbf{P}^l, \mathbf{F}), \\
    \mathbf{Q}_{3D}^{l} = \mathrm{FFN}(\mathrm{SA}(\mathbf{Q}_{3D}^{l})), \quad \mathbf{Q}_{\text{cam}}^{l} = \mathrm{FFN}(\mathrm{SA}(\mathbf{Q}_{\text{cam}}^{l})) \nonumber
\end{gather}
where $\mathrm{SA}$ representing the self-attention layer and $\mathbf{P}^l$ indicates 2D reference points in the $l$-th layer (refer to details in \cref{sec: dfa}).
Finally, the updated 3D queries are utilized to refine the 3D Gaussians by \cref{eq: G_update},
\begin{equation} \label{eq: G_update}
    \mathbf{G}^{l} = \Delta \mathbf{G}^{l} + \mathbf{G}^{l-1}, \Delta \mathbf{G}^{l} = \mathrm{Head_{3D}}(\mathbf{Q}_{3D}^{l})
\end{equation}
where $\mathbf{G}^{l}$ is the parameters of 3D Gaussians in current layer \footnote{In the first layer, $\mathbf{G}^{l-1}$ is initialized by conducting an MLP on the intialized 3D queries.},  $\mathrm{Head_{3D}}$ is an MLP. 
% Each query corresponds to a 3D Gaussian ellipsoid, determining the parameters for that specific 3D Gaussian. 
Meanwhile, camera queries are utilized to update camera parameters, with the intermediate representation {\rayref} (see \cref{sec: rays2cam}) as shown in \cref{eq: ray_update},
\begin{equation} \label{eq: ray_update}
    \boldsymbol{\cal{R}}^{l} = \Delta \boldsymbol{\cal{R}}^{l} + \boldsymbol{\cal{R}}^{l-1}, \Delta \boldsymbol{\cal{R}}^{l} = \mathrm{Head_{cam}}(\mathbf{Q}_{cam}^{l})
\end{equation}
where $\boldsymbol{\cal{R}}^{l}$ denotes RefRay in current layer \footnote{In the first layer, $\boldsymbol{\cal{R}}^{l-1}$ is initialized from initial camera parameters, which are defined as identity matrics.}, and $\mathrm{Head_{cam}}$ is an MLP.
After that the camera parameters $\textbf{K}^{l}$ and $\boldsymbol{\pi}^{l}$ are updated by the ray representation {\rayref} $\boldsymbol{\cal{R}}^{l}$ (see \cref{sec: rays2cam} and \cref{app: proof}), and then they are passed into the next layer for 2D reference projection as in \cref{eq: pinhole}.

\subsection{\textbf{{\camdfa}}} \label{sec: dfa}
\noindent\textbf{2D reference points in \textbf{{\camdfa}}.}
As mentioned in \cref{sec: overview}, 2D reference points in {\camdfa} are defined as the projected points of 3D Gaussians.
Specifically, 2D reference points on the $i$-th view are generated by projecting the center of 3D Gaussians onto the $i$-th view, using the pinhole camera model \citep{pinhole1, pinhole2}, as shown in \cref{fig: dfa} and \cref{eq: pinhole},
\begin{equation} \label{eq: pinhole}
    \mathbf{P}_i = \textbf{K}_i\boldsymbol{\pi}_i\boldsymbol{\mu}
\end{equation}
where $\mathbf{P}_i$ represents the 2D reference points on the $i$-th view, $\boldsymbol{\pi}_i$ is the camera extrinsics, and $\textbf{K}_i$ is camera intrinsics of the $i$-th view. 
%Both matrices $\textbf{K}_i$ and $\boldsymbol{\pi}_i$ are represented in homogeneous form. 

\noindent\textbf{Query updating for 3D Gaussian in {\camdfa}.}
% get attention score and sampling points
The locations of the 3D Gaussian queries are defined in canonical space \footnote{The canonical space is defined as one input view's local camera coordinates \citep{noposplat}, and the selected input view is called reference view.} unitarily rather than individually for each view. Sharing the queries across multi-views directly is impractical. Inspired by \cite{unig}, we consider incorporating camera information for each view. Specifically, we employ camera modulation with adaptive layer norm (adaLN) \citep{openlrm, stylegan, stylegan1, stylegan2} to derive 3D queries for the $i$-th view as shown in \cref{eq: modulation}, 
\begin{gather} \label{eq: modulation}
    \textbf{Q}_{3D_i} = \mathrm{LayerNorm}(\textbf{Q}_{3D}) * (1 + \text{scale}) + \text{shift}, \\ \text{shift}, \text{scale} = \mathrm{MLP}(\textbf{Q}_{cam_i}) \nonumber
\end{gather} 
% Note that the reference view has camera pose as the identity matrix.
The 3D queries are refined by conducting deformable cross-attention (DFA) on it with image features sampled surrounding the 2D reference points $\mathbf{P}_i$ (see \cref{eq: pinhole}) in the $i$-th view as keys and values as shown in \cref{eq: dfa},
\begin{equation} \label{eq: dfa}
    \textbf{Q}_{3D_i} = \mathrm{DFA}(\textbf{Q}_{3D_i}, \mathbf{P}_i, \textbf{F}_i)
\end{equation}
where $\textbf{F}_i$ is the image feature for the $i$-th view, $\mathrm{DFA}$ is the deformable cross-attention (see details in \cref{app: model_details}).
After refining the 3D queries for each view, it is essential to merge them to derive the unitary 3D queries. In alignment with \cite{unig}, we utilize a weighted sum by \cref{eq: weight_sum}.
\begin{equation} \label{eq: weight_sum}
    \textbf{Q}_{3D} = \sum_i w_i\textbf{Q}_{3D_i}, w_i=\mathrm{sigmoid}(\mathrm{MLP}(\textbf{Q}_{3D_i})).
\end{equation}

% applying the sigmoid function on the MLP output based on each view's queries, denoted as 

\noindent\textbf{Query updating for camera parameters in \textbf{{\camdfa}}.}
Instead of defining in canonical space like the 3D Gaussian queries, the camera queries are individually defined for each view. Meanwhile, the pose of each camera is specified relative to the reference view (defined as the $1$-th view in our scenario without loss of generality). Therefore, to update camera queries for the $i$-th view, we only consider the $1$-th and the $i$-th view as shown in \cref{cam_query}.
\begin{gather} \label{cam_query}
    \textbf{Q}_{cam_i} \Leftarrow \mathrm{DFA}(\textbf{Q}_{cam_i}, \mathbf{P}_{{cam}_i}, \textbf{F}_i) \\
    \textbf{Q}_{cam_1} \Leftarrow \mathrm{DFA}(\textbf{Q}_{cam_1}, \mathbf{P}_{{cam}_1}, \textbf{F}_i). \nonumber
\end{gather}
After refining $\textbf{Q}_{cam_1}$ and $\textbf{Q}_{cam_i}$, our model performs an extra dense cross-attention to fuse them as shown in \cref{fusion_cam},
\begin{gather} \label{fusion_cam}
    \textbf{Q}_{cam_i} \Leftarrow \textbf{A}_i \cdot \textbf{W}^V\textbf{Q}_{cam_1}, \\
    \textbf{A}_i = \text{softmax}\left(\frac{\textbf{W}^Q\textbf{Q}_{cam_i} \cdot (\textbf{W}^K\textbf{Q}_{cam_1})^T}{\sqrt{d_k}}\right) \nonumber
\end{gather}
where $\textbf{W}^Q, \textbf{W}^k, \textbf{W}^V$ denotes transformation matrices, and $d_k$ is the dimension of keys.

\begin{figure}
    \centering
    \includegraphics[width=0.7\textwidth]{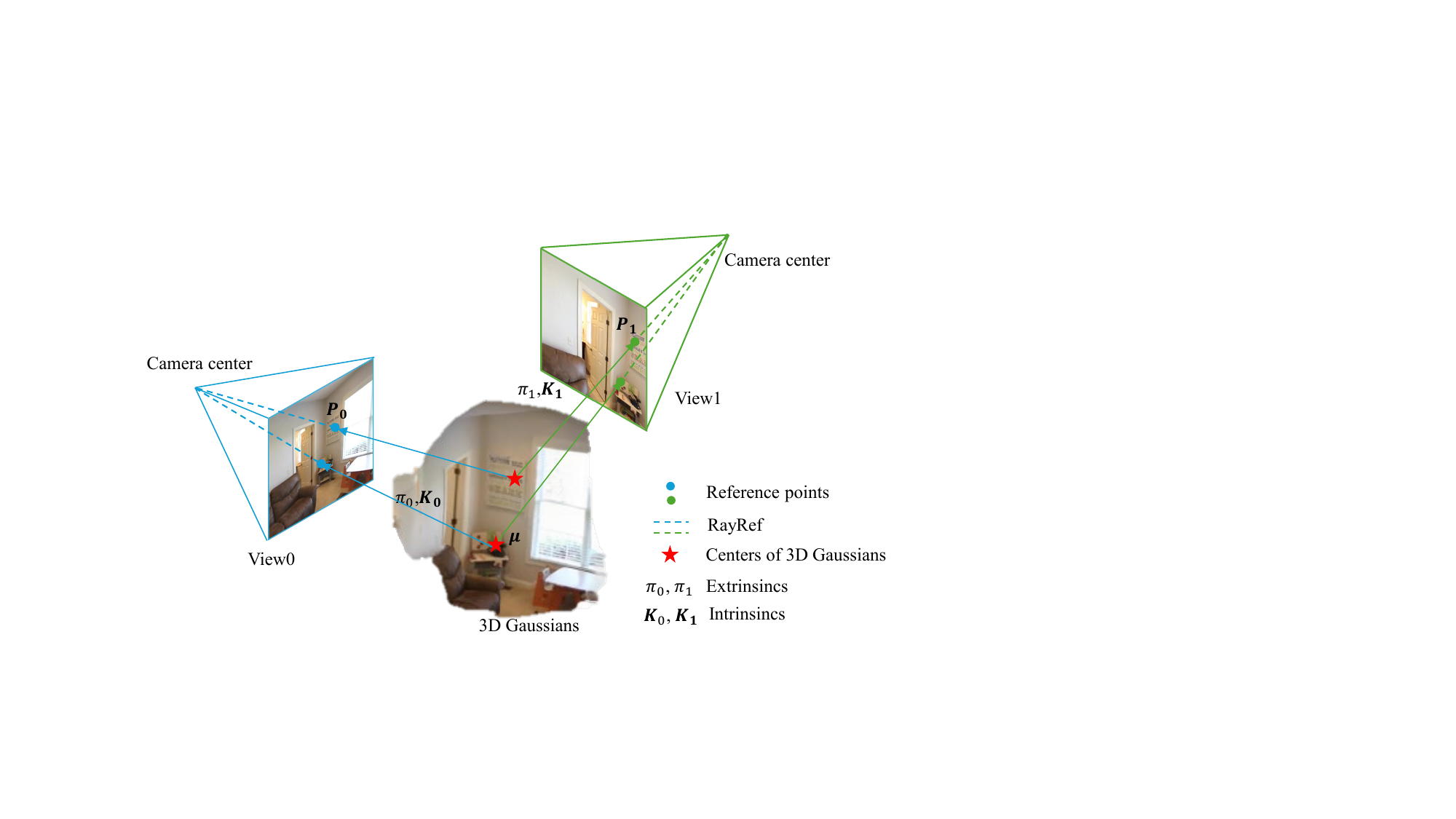}
    
    \caption{Our model projects the center of 3D Gaussians onto each input view by the camera parameters to get the reference points. The reference points are utilized to generate {\rayref} described in \cref{sec: rays2cam}, which are the rays from the camera center to reference points. The 3D Gaussians and camera parameters are then explicitly linked by the reference points.}
    \label{fig: dfa}
    
\end{figure}

% \begin{equation}
%     Q = \mathrm{MLP}(\textbf{Q}_{cam_i}^{cur})
%     K, V = \mathrm{MLP}(\textbf{Q}_{cam_i}^{ref})
%     \textbf{Q}_{cam_i} = QKV
% \end{equation}

\noindent\textbf{Insights in {\camdfa}.}
The objective of {\camdfa} is to link 3D Gaussians and camera parameters explicitly by sharing the same set of the projected 2D reference points. With the constraint in \cref{eq: pinhole}, 3D Gaussians (with the centers $\boldsymbol{\mu}$) and camera parameters ($\textbf{K}$, $\boldsymbol{\pi}$) are optimized simultaneously. This joint optimization aids in obtaining more meaningful 2D reference points and subsequently extracting more relevant image features from multi-view images. 
% Hence, we establish a connection between camera parameters estimation and 3D structure representation through these \textbf{shared} 2D reference points.

\subsection{{\rayref} and Pl\"{u}cker-Camera sparse mapping}  \label{sec: rays2cam}
% \begin{wrapfigure}{r}{0.25\textwidth}
%     \includegraphics[width=0.25\textwidth]{}
%     \label{fig: rays}
% \end{wrapfigure}
Typically, camera for the $i$-th view is parameterized by extrinsics $\boldsymbol{\pi}_i$ (consisting of rotations $\boldsymbol{\pi}_{i_\textbf{R}} \in SO(3)$ and translations $\boldsymbol{\pi}_{i_\mathbf{t}} \in \mathbb{R}^3$) and intrinsics $\textbf{K}_i \in \mathbb{R}^{3 \times 3}$. However, directly regressing $\textbf{K}_i$ and $\boldsymbol{\pi}_i$ from the camera queries ($\textbf{Q}_{\text{cam}_i} \in \mathbb{R}^{N \times D}$ for the $i$-th view), which entails regressing only 16 parameters (9 for the rotation matrix, 3 for translation, and 4 for intrinsics including focal lengths and principal points) from the $N \times D$ parameters through an MLP, may be suboptimal for neural learning~\cite{rays}. 
% which often benefits from over-parameterized distributed representations (\citep{rays}). 
Therefore, inspired by \cite{rays}, we represent the camera parameters by $N$ rays, transferring starting from the camera center to the $N$ 2D reference points on the image plane as shown in \cref{fig: dfa}. 
More specifically, our approach over-parameterize camera in the $i$-th view by a set of rays $\boldsymbol{\cal{R}}_i = {\textbf{r}_{i1}, \ldots, \textbf{r}_{iN}}$, where each ray $\textbf{r}_{ij} \in \mathbb{R}^6$ is represented by Pl\"{u}cker coordinates \citep{plucker}: $\textbf{r}_{ij} \in \mathbb{R}^6$. 
Different from previous methods \citep{rays}, we define the rays sparsly on the 2D refernce points instead of on each pixel to enhance the relationship between 3D Gaussians and camera parameters. After each layer, once the rays are predicted by \cref{eq: ray_update}, the camera parameters can be sovled by RQ-decomposition on a overdetermined system of equations from the Pl\"{u}cker coordinates of {\rayref} with details of  Pl\"{u}cker-camera sparse mapping in \cref{app: proof}.

\subsection{Training objective} \label{sec: loss}
We leverage the differentiable rendering method 3D GS \cite{3d-gs} to generate RGB images from the 3D Gaussians $\textbf{G}$ predicted by our model. Aligning with \cite{openlrm, LGM, noposplat}, we employ an RGB loss in \cref{eq: loss_3d}, which consists of both a mean square error loss $\mathcal{L}_{\text{MSE}}$ and a VGG-based LPIPS (Learned Perceptual Image Patch Similarity) loss \citep{LIPIS} $\mathcal{L}_{\text{LPIPS}}$, to guide the rendered views. Here $I_{pd}$ and $I_{gt}$ represent rendered views and ground truth images, ewspectively.
\begin{equation} \label{eq: loss_3d}
    \mathcal{L}_{3D} = \mathcal{L}_{\text{MSE}}(I_{pd}, I_{gt}) + \lambda\mathcal{L}_{\text{LPIPS}}(I_{pd}, I_{gt})
\end{equation}
For the camera parameters, with the 2D reference points, we can calculate the rays from the ground truth camera center to the $N$ 2D reference points to get the ray supervision ${\cal R}_{j_{gt}}$. The ray loss is defined as \cref{eq: loss_rays}. 
\begin{equation} \label{eq: loss_rays}
{\cal L}_{rays} = \sum_{j=1}^N \left\| {\cal R}_{j_{gt}} - {\cal R}_{j_{pd}} \right\|_2.
\end{equation}
The total loss can be described as \cref{eq: loss}, where $\lambda_{rays}$ and $\lambda_{3D}$ are weights of 3D and ray loss, respectively.
\begin{equation} \label{eq: loss}
    \mathcal{L} = \lambda_{rays}\mathcal{L}_{rays} + \lambda_{3D}\mathcal{L}_{3D}
\end{equation}

\section{Experiments} \label{sec: exp}
\begin{table*}[!htp]
\centering
\caption{Novel view synthesis performance comparison on the ACID \citep{ACID} dataset. `-A' means with evaluation-time pose alignment.}

\label{tab:acid}

\resizebox{0.95\linewidth}{!}{\begin{tabular}{lcccc}
\toprule
 & \multicolumn{4}{c}{Overlap Settings} \\
\cmidrule(lr){2-5}
Method & Small & Medium & Large & Average \\
& PSNR\(\uparrow\) SSIM\(\uparrow\) LPIPS\(\downarrow\) & PSNR\(\uparrow\) SSIM\(\uparrow\) LPIPS\(\downarrow\) & PSNR\(\uparrow\) SSIM\(\uparrow\) LPIPS\(\downarrow\) & PSNR\(\uparrow\) SSIM\(\uparrow\) LPIPS\(\downarrow\) \\
\midrule
\textbf{Pose-Required} & & & & \\
pixelNeRF & 19.376 0.535 0.564 & 20.339 0.561 0.537 & 20.826 0.576 0.509 & 20.323 0.561 0.533 \\
AttnRend & 20.942 0.616 0.398 & 24.004 0.720 0.301 & 27.117 0.808 0.207 & 24.475 0.730 0.287 \\
pixelSplat & 22.053 0.654 0.285 & 25.460 0.776 0.198 & 28.426 0.853 0.140 & 25.819 0.779 0.195 \\
MVSplat & 21.392 0.639 0.290 & 25.103 0.770 0.199 & 28.388 0.852 0.139 & 25.512 0.773 0.196 \\
\midrule
\textbf{Pose-Free} & & & & \\
DUSt3R & 14.494 0.372 0.502 & 16.256 0.411 0.453 & 17.324 0.431 0.408 & 16.286 0.411 0.447 \\
MASt3R & 14.242 0.366 0.522 & 16.169 0.411 0.463 & 17.270 0.430 0.423 & 16.179 0.409 0.461 \\
CoPoNeRF & 18.651 0.551 0.485 & 20.654 0.595 0.418 & 22.654 0.652 0.343 & 20.950 0.606 0.406 \\
NoPoSplat & 21.987 0.603 0.278 & 23.236 0.689 0.238 & 24.468 0.746 0.180 & 23.416 0.692 0.226 \\
NoPoSplat-A & 23.087 0.685 0.258 & 25.624 0.777 0.193 & 28.043 0.841 0.144 & 25.961 0.781 0.189 \\
Ours & 22.809 0.658 0.228 & 23.292 0.707 0.218 & 24.625 0.783 0.172 & 23.648 0.723 0.204 \\
\textbf{Ours-A} & \textbf{24.343 0.713 0.237} & \textbf{25.933 0.798 0.187} & \textbf{28.467 0.859 0.120} & \textbf{26.484 0.803 0.174} \\
\bottomrule
\end{tabular}}

\end{table*}

\begin{figure}
    \centering
\includegraphics[width=0.8\textwidth]{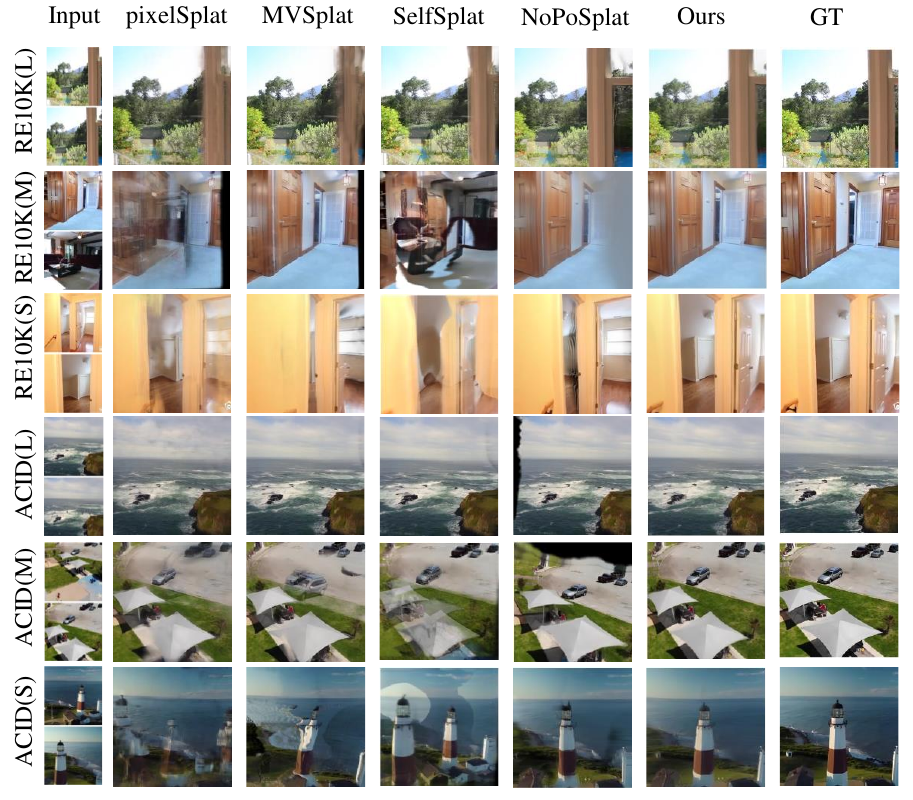}

    \caption{Qualitative comparison on RE10K and ACID datasets. the `L', `M', and  `S' in the brackets meaning the groups of overlapping large, medium, and small, respectively.}
    \label{fig: vis_all}
    
\end{figure}
\noindent\textbf{Datasets}
Our paper focuses on benchmarking the Novel View Synthesis (NVS) task using the entire scene dataset. We have aligned our approach with the datasets employed in prior works \cite{noposplat, mvsplat, pixelsplat}, which leverage RealEstate10K (RE10K) \citep{re10k} and ACID \citep{ACID} predominantly for both training and testing. We have adhered to the same train-test division and utilized the associated view indices outlined in \cite{noposplat}. Additionally, to enhance the scalability of our model, we have followed the approach in \cite{noposplat} by incorporating the combined datasets of RE10K and DL3DV \citep{dl3dv} (including all 11K subsets) for model training.

\noindent\textbf{Evaluation metrics}
Our model optimizes pose-free NVS and pose estimation simultaneously. For NVS evaluation, we consider metrics such as PSNR, SSIM, and LPIPS \citep{lpips}. In terms of pose estimation, we calculate the Area Under the Curve (AUC) for cumulative pose error using thresholds of \ang{5}, \ang{10}, and \ang{20} \citep{noposplat, auc}. Furthermore, we also evaluate the performance of our model using input views with varying degrees of overlap, where a larger overlap indicates closer proximity between the input views. We categorize the overlaps into three groups: small (5\% - 30\%), medium (30\% - 55\%), and large (55\% - 80\%) following \citep{noposplat}.

\noindent\textbf{Baselines}
We conduct comparison to the pose-free method NoPoSplat \citep{noposplat}, and also other sparse view reconstruction methods, including pose-required methods pixelNeRF \citep{pixelnerf}, AttnRend \citep{AttnRend}, pixelSplat \citep{pixelsplat}, and MVSplat \citep{mvsplat}, and pose-free methods DUSt3R \citep{dust3r}, MASt3R \citep{mast3r}, Splatt3R \citep{splatt3r}, SelfSplat \citep{selfsplat}, CoPoNeRF \citep{CoPoNeRF}, and RoMa \citep{roma}.

\noindent\textbf{Implementation details}
We employ ViT initialized by NoPoSplat and multi-layer refinement contains 4 layers. In accordance with prior research, we input all images at a resolution of 256 $\times$ 256. To compare fairly, we select the number of 3D Gaussians being same to previous methods, that is, the number of pixels in 2 views ($2 \times 256 \times 256$). Our model is trained on 8 A100 GPUs for 1 day.

\subsection{Results and analysis}

\begin{table*}[!t]
\centering
\caption{Novel view synthesis performance comparison on the RealEstate10k \citep{re10k} dataset. `-A' means with evaluation-time pose alignment. `-IF' means the model is intrinsic free.}

\label{tab:re10k}
\resizebox{0.95\linewidth}{!}{
\begin{tabular}{lcccc}
\toprule
 & \multicolumn{4}{c}{Overlap Settings} \\
\cmidrule(lr){2-5}
Method & Small & Medium & Large & Average \\
& PSNR\(\uparrow\) SSIM\(\uparrow\) LPIPS\(\downarrow\) & PSNR\(\uparrow\) SSIM\(\uparrow\) LPIPS\(\downarrow\) & PSNR\(\uparrow\) SSIM\(\uparrow\) LPIPS\(\downarrow\) & PSNR\(\uparrow\) SSIM\(\uparrow\) LPIPS\(\downarrow\) \\
\midrule
\textbf{Pose-Required} & & & & \\
pixelNeRF & 18.417 0.601 0.526 & 19.930 0.632 0.458 & 20.869 0.639 0.485 & 19.824 0.626 0.485 \\
AttnRend & 19.151 0.663 0.368 & 22.532 0.763 0.186 & 25.897 0.845 0.269 & 22.664 0.762 0.269 \\
pixelSplat & 20.263 0.717 0.266 & 23.711 0.809 0.181 & 27.151 0.879 0.122 & 23.848 0.806 0.185 \\
MVSplat & 20.353 0.724 0.250 & 23.778 0.812 0.173 & 27.408 0.884 0.116 & 23.977 0.811 0.176 \\
\midrule
\textbf{Pose-Free} & & & & \\
DUSt3R & 14.101 0.432 0.468 & 15.419 0.451 0.402 & 16.427 0.453 0.432 & 15.382 0.447 0.432 \\
MASt3R & 13.534 0.407 0.494 & 14.945 0.436 0.418 & 16.028 0.444 0.452 & 14.907 0.431 0.452 \\
Splatt3R & 14.352 0.475 0.472 & 15.529 0.502 0.425 & 15.817 0.483 0.421 & 15.318 0.490 0.436 \\
CoPoNeRF & 17.393 0.585 0.462 & 18.813 0.616 0.392 & 20.464 0.652 0.318 & 18.938 0.619 0.388 \\
SelfSplat & 17.506 0.550 0.461 & 19.357 0.704 0.378 & 20.868 0.672 0.256 & 19.33	0.656 0.363 \\
NoPoSplat & 21.814 0.765 0.220 & 23.044 0.787 0.178 & 25.408 0.844 0.126 & 23.424 0.798 0.173 \\
NoPoSplat-A & 22.514 0.784 0.210 & 24.899 0.839 0.160 & 27.411 0.883 0.119 & 25.033 0.838 0.160 \\
Ours-IF & 21.312 0.741 0.234 & 22.891 0.773 0.189 & 25.312 0.829 0.138 & 23.200 0.781 0.185 \\
Ours & 22.843 0.781 0.192 & 23.394 0.804 0.175 & 26.270 0.872 0.110 & 24.093 0.818 0.160 \\
\textbf{Ours-A} & \textbf{24.117 0.827 0.177} & \textbf{25.018 0.859 0.159} & \textbf{27.985 0.904 0.110} & \textbf{25.656 0.864 0.149} \\
\bottomrule
\end{tabular}
}
\end{table*}

\noindent\textbf{Quantity results for NVS}
The comparison results for NVS performance on RE10K and ACID dataset are depicted in \cref{tab:re10k} and \cref{tab:acid}. As demonstrated in the table, our model consistently outperforms previous methods, even surpassing methods that rely on ground-truth pose input, particularly excelling in scenarios with limited view overlap. In particular, our unposed image model successfully reconstructs a plausible 3D scene that aligns well with the inputs provided. However, reconstructing a 3D scene with only two input views inherently poses challenges due to the ambiguity arising from multiple scenes producing the same pair of images \citep{noposplat}. To address this, as discussed in \cite{noposplat, instantsplat, nerf--}, we adopt an evaluation-time pose alignment strategy to refine the camera pose for the target view, ensuring a fair comparison. For each evaluation sample, we initially reconstruct 3D Gaussians using our proposed approach. Subsequently, we fine-tune the target camera pose to align the rendered image closely with the ground-truth image. We present results for both models with and without this alignment, denoting the aligned method with `-A'. 
As outlined in \cref{sec: rays2cam}, rays are capable of representing not only camera poses but also intrinsics. Consequently, we conduct experiments that do not provide intrinsics but instead predicted them, the results are represented by `Ours-IF' in \cref{tab:re10k}. The results is worse than using ground-truth intrinsics but comparable to previous methods.

\noindent\textbf{Quality results for NVS}
We present quality comparisons with pose-required methods (pixelSplat \citep{pixelsplat} and MVSplat \citep{mvsplat}) and the two categories of pose-free methods mentioned in \cref{sec:intro}. The first category is that solely optimize 3D while neglecting poses (NoPoSplat \citep{noposplat}) and the second category is that optimize both pose and 3D using separate networks (SelfSplat \citep{selfsplat}) \cref{fig: vis_all} and \cref{app: more_vis}. Our method demonstrates superior performance compared to previous ones. Pose-required methods typically employ a `transform-then-fuse' strategy \citep{noposplat}, generating 3D Gaussians under each view's camera coordinates and then concatenating them in world coordinates, resulting in blurring in novel views. Pose-free methods that focus solely on 3D may encounter issues with incorrect rendering in novel views due to improper pose handling. Pose-free methods that optimize 3D and pose separately risk inaccurate novel views resulting from improper pose estimations.

\noindent\textbf{Relative pose estimation}
The results of relative pose estimated are presented in Table \ref{tab:pose_performance}. Aligning with \cite{noposplat}, we assess the performance in pose estimation by training two models in different datasets. One model is trained using the RE10K dataset, while the other is trained using a combination of RE10K and DL3DV datasets, denoted as `Ours' and `Ours*' in the table, respectively. We evaluate the performance on both the RE10K test set and the ACID test set to demonstrate the efficacy of our models on both in-domain and cross-domain datasets. The results indicate that as the training set grows, the accuracy of pose estimation improves. The comparison on another evaluation matrix with rotation and translation error is shown in \cref{tab:pose_error}.

\begin{table}[!htp]
\centering
\caption{Pose estimation performance in AUC with various thresholds on RE10k, ACID. Methods with `*' meaning the model is trained on RE10k+DL3DV dataset, otherwise only on RE10K. `-A' means with evaluation-time pose alignment.}

\label{tab:pose_performance}

\begin{tabular}{lcccccc}
\toprule
 & \multicolumn{3}{c}{RE10k} & \multicolumn{3}{c}{ACID} \\
\cmidrule(lr){2-4} \cmidrule(lr){5-7}
Method & \ang{5} $\uparrow$ & \ang{10} $\uparrow$ & \ang{20} $\uparrow$ & \ang{5} $\uparrow$ & \ang{10} $\uparrow$ & \ang{20} $\uparrow$ \\
\midrule
CoPoNeRF & 0.161 & 0.362 & 0.575 & 0.078 & 0.216 & 0.398 \\
DUSt3R & 0.301 & 0.495 & 0.657 & 0.166 & 0.304 & 0.437 \\
MASt3R & 0.372 & 0.561 & 0.709 & 0.234 & 0.396 & 0.541 \\
RoMa & 0.546 & 0.698 & 0.797 & 0.463 & 0.588 & 0.689 \\
NoPoSplat & 0.546 & 0.719 & 0.843 & 0.366 & 0.519 & 0.654 \\
NoPoSplat* & 0.623 & 0.775 & 0.867 & 0.440 & 0.578 & 0.693 \\
NoPoSplat-A  & 0.672 & 0.792 & 0.869 & 0.454 & 0.591 & 0.709 \\
NoPoSplat-A*  & 0.691 & 0.806 & 0.877 & 0.486 & 0.617 & 0.728 \\
Ours & 0.605 & 0.749 & 0.885 & 0.405 & 0.556 & 0.661 \\
Ours* & 0.650 & 0.788 & 0.887 & 0.451 & 0.587 & 0.702 \\
Ours-A & 0.710 & 0.805 & 0.887 & 0.487 & 0.610 & 0.715 \\
Ours-A* & 0.683 & 0.810 & 0.886 & 0.499 & 0.627 & 0.735 \\
\bottomrule
\end{tabular}

\end{table}

\noindent\textbf{Geometry reconstruction}
To conduct a comprehensive comparison between our model and previous approaches based on the reconstructed 3D Gaussians rather than solely novel views, we present the rendered geometry depicted using 3D Gaussians in \cref{fig: geo}. The results of geometry are visualized by the code provided by \cite{pixelsplat, mvsplat}.
As illustrated in the figure, by jointly optimizing 3D Gaussians and input view camera poses, our methods achieve better alignment between input views, resulting in reduced artifacts within the reconstructed scenes. Furthermore, our model exhibits enhanced performance in reconstructing missing elements that were challenging for previous methods.
% \paragraph{Cross-dataset generalization}

% \begin{table}[!htp]
% \small
% \centering
% \caption{Out-of-distribution performance comparison. Our method shows superior performance when zero-shot evaluation on DTU and ScanNet++ using the model solely trained on RE10k.}
% \label{tab:out_of_distribution}
% \begin{tabular}{lcccccccc}
% \toprule
% Training Data & Method & \multicolumn{3}{c}{DTU} & \multicolumn{3}{c}{ScanNet++} \\
% \cmidrule(lr){3-5} \cmidrule(lr){6-8}
%  & & PSNR $\uparrow$ & SSIM $\uparrow$ & LPIPS $\downarrow$ & PSNR $\uparrow$ & SSIM $\uparrow$ & LPIPS $\downarrow$ \\
% \midrule
% ScanNet++ & Splat3R & 11.628 & 0.325 & 0.499 & 13.270 & 0.507 & 0.445 \\
% \midrule
% \multirow{3}{*}{RealEstate10K} & pixelSplat & 11.551 & 0.321 & 0.633 & 18.434 & 0.719 & 0.277 \\
%  & MVSplat & 13.929 & 0.474 & 0.385 & 17.125 & 0.686 & 0.297 \\
%  & NoPoSplat-A & 17.899 & 0.629 & 0.279 & 22.136 & 0.798 & 0.232 \\
%   & Ours-A &  \\
% \bottomrule
% \end{tabular}
% \end{table}

\noindent\textbf{Model Efficiency}
To show the model efficiency of our model, we test the inference time on a 3090 GPU with batch size 1. As shown in \cref{tab: efficiency}, our method can predict 3D Gaussians together with the camera poses from two $256 \times 256$ input images in 0.131 seconds (7.63 fps), outperforms previous methods.

\noindent\textbf{In-the-wild input images}
To ensure that our model can generalize beyond the training dataset, we evaluated it on in-the-wild images, as illustrated in \cref{fig:in_the_wild}. Our model demonstrates ability in both indoor and outdoor scenes.

\begin{figure}
    \centering
    \includegraphics[width=0.99\textwidth]{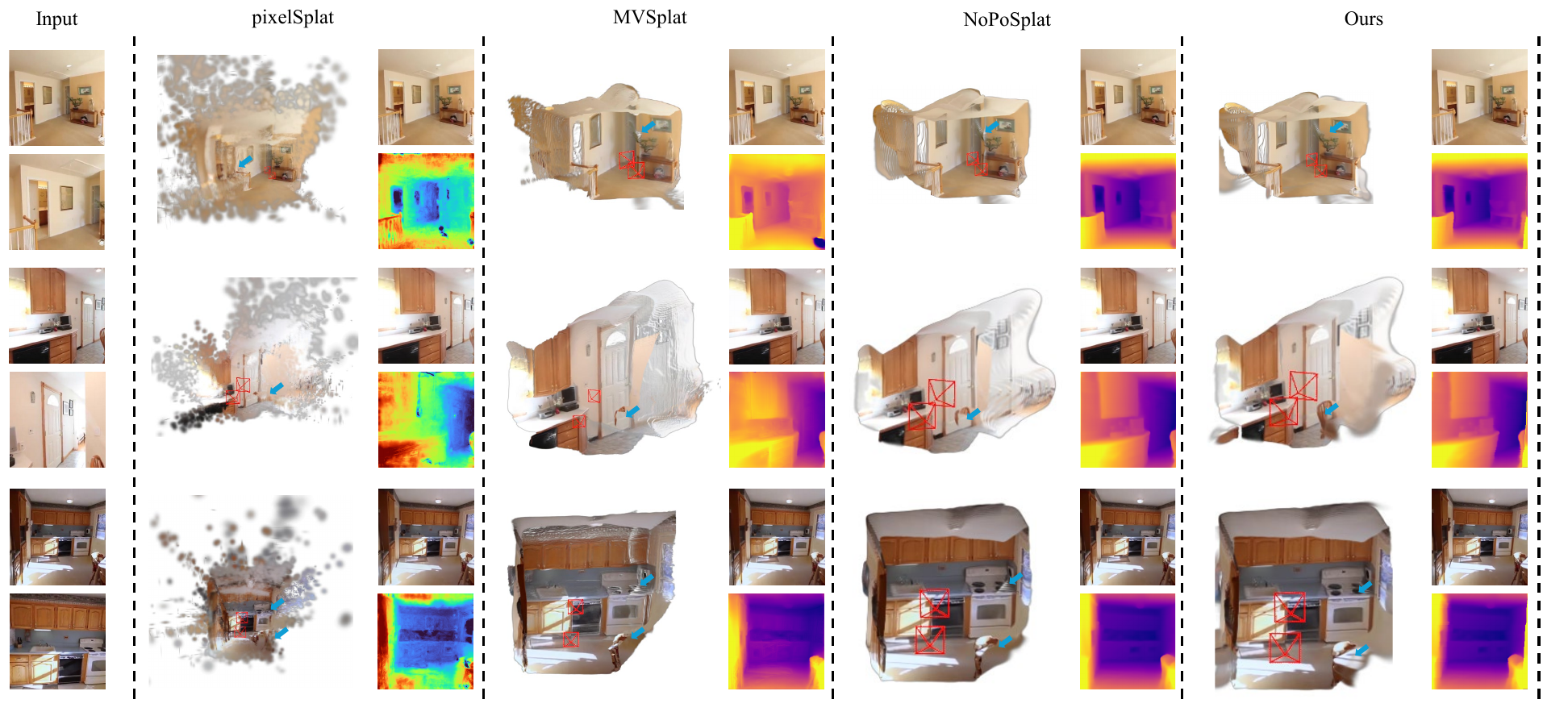}
    
    \caption{Geometry visualization with renderd novel view and depth. Our model demonstrates enhanced performance in addressing specific artifacts in the first row. Furthermore, our model excels in reconstructing missing elements that were challenging for prior approaches, (like chairs and tables situated in the corner), indicated by the \textcolor{blue}{blue} arrows. }
    \label{fig: geo}
    
\end{figure}

\begin{table}[!htp]
\centering
\caption{Inference time comparison. `3D time' denotes the duration for reconstructing 3D scenes, `pose time' signifies the time taken for pose estimation, `forward time' represents the total duration for inferring both 3D scenes and poses, while `rendering time' indicates the time for NVS from 3D for a single image. For the network without the parts or has no separate parts, we use `NA'. without All measurements are in seconds.}

\label{tab: efficiency}

\begin{tabular}{lcccc}
\toprule
 Method & 3D time & Pose time & Forward time & Rendering time \\
\midrule
pixelSplat & 0.825 & NA & 0.825 & \textbf{0.003} \\
MVSplat & 0.0.301 & NA & 0.301 & \textbf{0.003} \\
DBARF & NA & NA & 1.132 & 1.268 \\
DUSt3R & 0.062 & 0.211 & 0.273 & \textbf{0.003}  \\
MASt3R & 0.079 & 0.211 & 0.290 & \textbf{0.003} \\
NoPoSplat & 0.108 & 0.211 & 0.319 & \textbf{0.003} \\
Our model & NA & NA & \textbf{0.131} & \textbf{0.003} \\
\bottomrule
\end{tabular}

\end{table}

\subsection{Ablation studies}
\noindent\textbf{Ground-truth pose input}
In contrast to earlier pose-free methods lacking pose incorporation in their networks, our model optimizes 3D and pose jointly. Consequently, we can use the ground-truth poses as input to further enhance our model's performance give they are given. By utilizing ground-truth pose data, our model demonstrates even better outcomes, as evidenced in \cref{tab:ablation}.

% \textbf{Predict intrinsics}
% As outlined in \cref{sec: rays2cam}, rays are capable of representing not only camera poses but also intrinsics. Consequently, we conducted an ablation study where we did not provide intrinsics but predicted them instead, as detailed in \cref{tab:ablation}.

\noindent\textbf{No {\camdfa}}
To demonstrate the significance of our custom {\camdfa} decoder, we conducted an ablation study by omitting this component. In other words, we attempted to regress 3D Gaussians directly from the ViT encoder. The absence of the {\camdfa} resulted in a decrease in performance as shown in \cref{tab:ablation}.

\noindent\textbf{Regress camera pose as 6D pose}
As discussed in \cref{sec: rays2cam}, directly regressing 6D pose of the camera from queries with numerous parameters, leads to suboptimal results. To support this, we conducted an ablation study on directly regressing the 6D pose (rotation and translation) as shown in \cref{tab:ablation}, with the results supporting our assertion.

\noindent\textbf{Rays on pixel}
In our model, we define rays from the camera center to 2D reference points, which differs from the approach in \cite{rays}, where rays are defined from the camera center to each pixel or patch on the image. To demonstrate the effectiveness of our approach, we conduct an ablation study on the rays defined in \cite{rays}, referred to as `Rays on pixel' in \cref{tab:ablation}. As illustrated in the table, the design of our {\rayref} contributes to improved model performance.

\begin{figure}[!htp]
    \centering
    \includegraphics[width=0.99\textwidth]{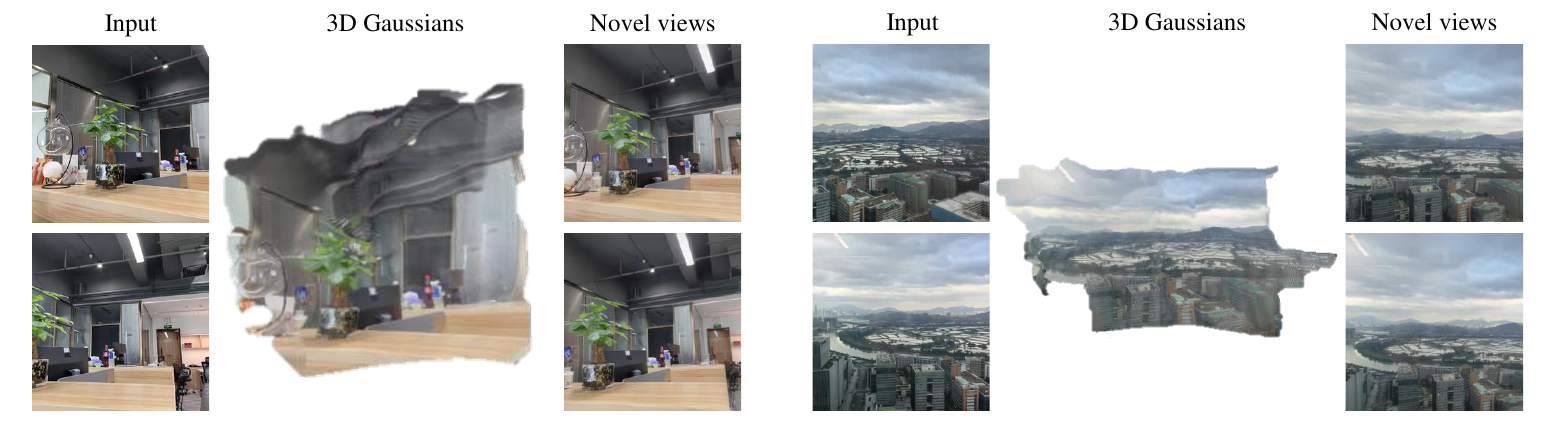}
    
    \caption{In-the-wild input images. Our method is applied using real-world photographs taken with mobile phones, encompassing both indoor and outdoor scenarios.}
    \label{fig:in_the_wild}
    
\end{figure}
\noindent\textbf{Input more views}
Our model is flexible and can receive an arbitrary number of input views rather than only two views. The inclusion of three input views yields superior results compared to two views, as demonstrated in \cref{tab:ablation}. Additionally, we present the PSNR results for a larger number of input views, extending up to 10 views, in the model details section (\cref{app: more_views} \cref{fig: more_views}).

\begin{table}[!htp]
\centering
\caption{Ablation Studies on RE10K dataset.}

\label{tab:ablation}

\begin{tabular}{lcccc}
\toprule
 Method & PSNR $\uparrow$ & SSIM $\uparrow$ & LPIPS $\downarrow$  \\
\midrule
Full model & 25.656 & 0.864 & 0.149 \\
\midrule
GT pose input & 26.217 & 0.870 & 0.138 \\
No {\camdfa} & 23.916 & 0.808 & 0.172\\
6D pose & 23.929 & 0.814 & 0.163 \\
Rays on pixel & 24.323 & 0.822 & 0.158 \\
Input 3 views & 26.706 & 0.884 & 0.124 \\
\bottomrule
\end{tabular}

\end{table}

\noindent\textbf{Number of Gaussians}
We evaluate our model using the same number of 3D Gaussians as previous methods, equivalent to the total pixels in two input views (131,072). However,our model offers the flexibility to treat the number of 3D Gaussians as a hyperparameter. In our ablation study on the number of 3D Gaussians (\cref{app: num_gaussian} \cref{fig: num_gaussian}), we observe that performance improves with an increased number of 3D Gaussians utilized.

\section{Conclusion and Limitation} \label{sec: conclusion}
In conclusion, we introduces a novel approach \textbf{{\methodname}} to optimizing 3D Gaussians and camera poses collaboratively within a unified network, enhancing the relationship between them for improved scene reconstruction and novel view synthesis. Inspired by Deformable DETR, our model represents define separate queries for 3D Gaussians and camera parameters, leveraging image features for efficient updates within \textbf{{\camdfa}} block. This process involves projecting 3D queries onto image features using camera poses as 2D reference points and utilizing surrounding image features to update both 3D and camera queries. To tackle the challenge in regressing rotations and translations from numerous parameters, we adopt a ray-based design ({\rayref}) that representing camera poses as rays from the camera center to the 2D reference points on the image feature.
Our methodology surpasses previous approaches on benchmark datasets like RealEstate10K and ACID, demonstrating its effectiveness and potential for advancing scene reconstruction and novel view synthesis. 

\noindent\textbf{Limitations} A limitation persists in that the training datasets seldom contain videos that capture the entire 360-degree scenes. Consequently, our model lack the ability on reconstruct the \ang{360} scenes from one forward.
\clearpage

{
    \small
    \bibliographystyle{ieeenat_fullname}
    \bibliography{main}
}
\clearpage

\appendix
\section{Appendix}

\subsection{Model details}
\subsubsection{Comparison to previous methods}  \label{app: method_comparison}
\begin{table}[!htp]
\centering
\caption{Methods comparison: `Pose-Free' means that the model is pose-free, `Jointly Opt.' means that the model optimize camera and 3D jointly, `Cano. Space' means that the model define 3D points in the canonical space for all views, `Intri.-Free' means that the model does not require intrinsic input.}
\label{tab:method_comparison}

\begin{tabular}{lcccc}
\toprule
 Method & Pose-Free & Jointly Opt. & Cano. Space & Intri.-Free \\
\midrule
PixelNeRF & \XSolidBrush & \XSolidBrush & \XSolidBrush & \XSolidBrush\\
PixelSplat & \XSolidBrush & \XSolidBrush & \XSolidBrush & \XSolidBrush \\
MVSplat & \XSolidBrush & \XSolidBrush & \XSolidBrush & \XSolidBrush\\
DUSt3R & \Checkmark & \XSolidBrush & \Checkmark & \XSolidBrush\\
MASt3R & \Checkmark & \XSolidBrush & \Checkmark & \XSolidBrush\\
CoPoNeRF & \Checkmark & \Checkmark & \Checkmark & \XSolidBrush \\
SelfSplat & \Checkmark & \XSolidBrush & \XSolidBrush & \XSolidBrush\\
NoPoSplat & \Checkmark & \XSolidBrush & \Checkmark & \XSolidBrush\\
Ours  & \Checkmark & \Checkmark & \Checkmark & \Checkmark  \\
\bottomrule
\end{tabular}
\end{table}
We offer a comparison with various methods in \cref{tab:method_comparison}. Our model is a pose-free approach that jointly optimizes pose and scene, operating within canonical space, and is free from intrinsic constraints.

\subsection{RefRay}  \label{app: proof}
% \begin{wrapfigure}{r}{0.25\textwidth}
%     \includegraphics[width=0.25\textwidth]{chapters/cam_free/Figs/rays.pdf}
%     \label{fig: rays}
% \end{wrapfigure}
Typically, camera for the $i$-th view is parameterized by extrinsics $\boldsymbol{\pi}_i$ (consisting of rotations $\boldsymbol{\pi}_{i_\textbf{R}} \in SO(3)$ and translations $\boldsymbol{\pi}_{i_\mathbf{t}} \in \mathbb{R}^3$) and intrinsics $\textbf{K}_i \in \mathbb{R}^{3 \times 3}$. However, directly regressing $\textbf{K}_i$ and $\boldsymbol{\pi}_i$ from the camera queries ($\textbf{Q}_{\text{cam}_i} \in \mathbb{R}^{N \times D}$ for the $i$-th view), which entails regressing only 16 parameters (9 for the rotation matrix, 3 for translation, and 4 for intrinsics including focal lengths and principal points) from the $N \times D$ parameters through an MLP, may be suboptimal for neural learning~\cite{rays}. 
% which often benefits from over-parameterized distributed representations (\citep{rays}). 
Therefore, inspired by \cite{rays}, we represent the camera parameters by $N$ rays, transferring starting from the camera center to the $N$ 2D reference points on the image plane as shown in \cref{fig: dfa}. 
More specifically, our approach over-parameterize camera in the $i$-th view by a set of rays $\boldsymbol{\cal{R}}_i = {\textbf{r}_{i1}, \ldots, \textbf{r}_{iN}}$, where each ray $\textbf{r}_{ij} \in \mathbb{R}^6$ is represented by Pl\"{u}cker coordinates \citep{plucker}: $\textbf{r}_{ij} \in \mathbb{R}^6$. 
Different from previous methods \citep{rays}, we define the rays sparsly on the 2D refernce points instead of on each pixel to enhance the relationship between 3D Gaussians and camera parameters. After each layer, once the rays are predicted by \cref{eq: ray_update}, the camera parameters can be sovled by RQ-decomposition on a overdetermined system of equations from the Pl\"{u}cker coordinates of RefRay with details of  Pl\"{u}cker-camera sparse mapping as following. We give the mapping from cameras to rays in \cref{cam_free_theorem: c2r} and the mapping from rays to cameras in \cref{cam_free_theorem: r2c}

\begin{theorem} \label{cam_free_theorem: c2r}
\textbf{Cameras to rays mapping.}
Given camera center $\mathbf{c} \in \mathbb{R}^{3}$, rotation matrix $\mathbf{R} \in \mathbb{R}^{3 \times 3}$, translation $\mathbf{t} \in \mathbb{R}^3$, camera intrinsics $\mathbf{K} \in\mathbb{R}^{3 \times 3}$, and a point $\mathbf{x} \in  \mathbb{R}^3$ in 3D with coordinate. The Pl\"{u}cker coordinates $\mathbf{d}, \mathbf{m}$ for a ray passing through points$ \mathbf{x}$ and $\mathbf{c}$, which defined as $\mathbf{d} = \mathbf{x} - \mathbf{c}$, $\mathbf{m} = \mathbf{c} \times \mathbf{d}$ can be derived by 
\begin{equation}
    \mathbf{d} = \mathbf{R}^T\mathbf{K}^{-1} \mathbf{u}, \mathbf{m} = (-\mathbf{R}^T \mathbf{t}) \times \mathbf{d}
\end{equation}
\end{theorem}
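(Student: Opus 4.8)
The plan is to verify the two stated Pl\"{u}cker formulas directly by chasing the standard pinhole-camera relations, treating the claim as a computation rather than a deep theorem. First I would fix conventions: the camera maps a world point $\mathbf{x}$ to image coordinates via $\mathbf{u} \propto \mathbf{K}(\mathbf{R}\mathbf{x} + \mathbf{t})$, and the camera center $\mathbf{c}$ is the point sent to the origin of the camera frame, so $\mathbf{R}\mathbf{c} + \mathbf{t} = \mathbf{0}$, i.e. $\mathbf{c} = -\mathbf{R}^T\mathbf{t}$. This identity is the linchpin: it immediately converts the expression $-\mathbf{R}^T\mathbf{t}$ appearing in the moment formula into $\mathbf{c}$, so once I establish the direction formula the moment formula $\mathbf{m} = \mathbf{c}\times\mathbf{d}$ follows by definition.

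For the direction $\mathbf{d}$, I would start from the given definition $\mathbf{d} = \mathbf{x} - \mathbf{c}$ and show it is parallel to $\mathbf{R}^T\mathbf{K}^{-1}\mathbf{u}$ (Pl\"{u}cker directions are only defined up to scale, so equality here is really equality up to the positive depth factor). The key step: the image point $\mathbf{u}$ in homogeneous coordinates satisfies $\mathbf{u} = \tfrac{1}{z}\mathbf{K}(\mathbf{R}\mathbf{x} + \mathbf{t})$ for the depth $z$ of $\mathbf{x}$ in the camera frame. Applying $\mathbf{K}^{-1}$ gives $\mathbf{K}^{-1}\mathbf{u} = \tfrac{1}{z}(\mathbf{R}\mathbf{x} + \mathbf{t})$, and then applying $\mathbf{R}^T$ and using $\mathbf{R}^T\mathbf{R} = \mathbf{I}$ yields $\mathbf{R}^T\mathbf{K}^{-1}\mathbf{u} = \tfrac{1}{z}(\mathbf{x} + \mathbf{R}^T\mathbf{t}) = \tfrac{1}{z}(\mathbf{x} - \mathbf{c})$. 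Thus $\mathbf{R}^T\mathbf{K}^{-1}\mathbf{u}$ is a positive scalar multiple of $\mathbf{x} - \mathbf{c} = \mathbf{d}$, which is exactly the claimed direction formula up to the scale ambiguity inherent in ray representations.

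With the direction in hand, the moment is immediate: substituting $\mathbf{c} = -\mathbf{R}^T\mathbf{t}$ into $\mathbf{m} = \mathbf{c}\times\mathbf{d}$ gives $\mathbf{m} = (-\mathbf{R}^T\mathbf{t})\times\mathbf{d}$, matching the statement. I would also note that the Pl\"{u}cker moment is well-defined because $\mathbf{m} = \mathbf{c}\times\mathbf{d} = (\mathbf{c} + s\mathbf{d})\times\mathbf{d}$ for any $s$, so it does not depend on which point along the ray we pick, confirming the ray is correctly parameterized regardless of where $\mathbf{x}$ sits on it.

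The main obstacle here is not mathematical depth but bookkeeping of conventions, and I expect that to be the genuinely delicate part. The precise forms of the formulas depend on whether the extrinsic maps world-to-camera or camera-to-world, whether $\mathbf{K}$ acts on the left, and the sign conventions relating $\mathbf{c}$, $\mathbf{R}$, and $\mathbf{t}$; an inconsistency in any of these flips a sign or inserts a spurious transpose. I would therefore pin down the convention $\mathbf{u} \propto \mathbf{K}(\mathbf{R}\mathbf{x}+\mathbf{t})$ and $\mathbf{c} = -\mathbf{R}^T\mathbf{t}$ at the outset and check that it is consistent with the projection equation \cref{eq: pinhole} used elsewhere in the paper, since this is where a careless reader — or author — is most likely to go astray.
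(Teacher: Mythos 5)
Your proposal is correct and takes essentially the same route as the paper's proof: both derive $\mathbf{c} = -\mathbf{R}^T\mathbf{t}$ from $\mathbf{R}\mathbf{c} + \mathbf{t} = \mathbf{0}$, back-project the image point via $\mathbf{R}^T\mathbf{K}^{-1}\mathbf{u}$ to obtain the direction, and substitute the camera-center identity into $\mathbf{m} = \mathbf{c}\times\mathbf{d}$. The only difference is a convention choice: the paper treats $\mathbf{u} = \mathbf{K}(\mathbf{R}\mathbf{x}_W + \mathbf{t})$ as the unnormalized projection so the direction formula is an exact equality, while you normalize by depth and (correctly) observe the formula then holds up to the positive scale inherent in Pl\"{u}cker ray representations.
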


\begin{proof}
Given $\mathbf{x}_W, \mathbf{x}_C$ denote the coordinates of point $\mathbf{x}$ in world and camera coordinates and camera coordinates, $\mathbf{c}_W,\mathbf{c}_C$ denote the coordinates of camera center $\mathbf{c}$ in world and camera coordinates and camera coordinates. Let $\mathbf{u}$ denotes the UV coordinates that derived from projecting $\mathbf{x}$ to the image by camera parameters $\mathbf{u} = \mathbf{K}(\mathbf{R}\mathbf{x}_W + \mathbf{t})$.

\begin{align}
    \mathbf{c}_C & = \mathbf{R}\mathbf{c}_W + \mathbf{t} \\
    0 & = \mathbf{R}\mathbf{c}_W + \mathbf{t} \\
    \Rightarrow \mathbf{c}_W & = -\mathbf{R}^T \mathbf{t}
\end{align}

\begin{align}
    \mathbf{x}_C & = \mathbf{R}\mathbf{x}_W + \mathbf{t} \\
    \mathbf{K}^{-1} \mathbf{u} & = \mathbf{x}_C = \mathbf{R}\mathbf{x}_W + \mathbf{t} \\
    \mathbf{x}_W & = \mathbf{R}^T(\mathbf{K}^{-1} \mathbf{u} - \mathbf{t}) \\
    & = \mathbf{R}^T\mathbf{K}^{-1} \mathbf{u} - \mathbf{c}_C
\end{align}
Then use the definition of $\mathbf{d}, \mathbf{m}$,
\begin{align}
    \mathbf{d} & = \mathbf{x}_W - \mathbf{c}_W \\
    & = \mathbf{R}^T\mathbf{K}^{-1} \mathbf{u}
\end{align}
\begin{align}
    \mathbf{m} & = \mathbf{c} \times \mathbf{d} \\
    & = (-\mathbf{R}^T \mathbf{t}) \times \mathbf{d}
\end{align}

\end{proof}

\begin{theorem} \label{cam_free_theorem: r2c}
\textbf{Rays to cameras mapping.}
Given a set of rays $\boldsymbol{\cal{R}} = {\mathbf{r}_{1}, \ldots, \mathbf{r}_{N}}$, which denote the $N$ rays representing a camera with camera center $\mathbf{c} \in \mathbb{R}^{3}$, rotation matrix $\mathbf{R} \in \mathbb{R}^{3 \times 3}$, translation $\mathbf{t} \in \mathbb{R}^3$, camera intrinsics $\mathbf{K} \in\mathbb{R}^{3 \times 3}$, and projected points $\mathbf{U}$ on the image plane from 3D points $\mathbf{X}$, the camera parameters can be solved by,
\begin{equation}
    \mathbf{c} = \arg \min_{\mathbf{p} \in \mathbb{R}^3} \sum_{\langle \mathbf{d}, \mathbf{m} \rangle \in \mathcal{R}} | \mathbf{p} \times \mathbf{d} - \mathbf{m} |^2
\end{equation}
\begin{equation} \label{eq: cam_rot}
    \mathbf{M} = \arg\min_{|\mathbf{H}|=1} \sum_{i=1}^m |\mathbf{H}\mathbf{d}_j \times \mathbf{u}_j|
\end{equation}
where $\mathbf{r} = \langle\mathbf{d}, \mathbf{m} \rangle \in \boldsymbol{\cal{R}}$ denotes a ray passing through points $\mathbf{x}$ and camera center $\mathbf{c}$, $\mathbf{d} = \mathbf{x} - \mathbf{c}$, $\mathbf{m} = \mathbf{c} \times \mathbf{d}$, $\mathbf{M} = \mathbf{K}\mathbf{R}$.
\end{theorem}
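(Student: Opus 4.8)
The plan is to show that the true camera parameters are exact, zero-residual minimizers of the two least-squares problems in the statement, and then to argue that these minimizers are unique, so that the optimization genuinely recovers $\mathbf{c}$, $\mathbf{R}$, and $\mathbf{K}$ (the latter two via $\mathbf{M}=\mathbf{K}\mathbf{R}$). First, for the camera center I would exploit the defining property of the Pl\"{u}cker moment: for every ray $\langle\mathbf{d},\mathbf{m}\rangle\in\boldsymbol{\cal{R}}$ we have $\mathbf{m}=\mathbf{c}\times\mathbf{d}$, and indeed any point on the line satisfies $(\mathbf{c}+t\mathbf{d})\times\mathbf{d}=\mathbf{c}\times\mathbf{d}=\mathbf{m}$. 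Substituting $\mathbf{p}=\mathbf{c}$ therefore annihilates every summand $|\mathbf{p}\times\mathbf{d}-\mathbf{m}|^2$, so $\mathbf{c}$ attains the global minimum value of zero. To promote this to uniqueness I would note that the objective is a convex quadratic in $\mathbf{p}$ whose Hessian equals $2\sum_j(|\mathbf{d}_j|^2 I-\mathbf{d}_j\mathbf{d}_j^T)$; since each term projects onto the plane orthogonal to $\mathbf{d}_j$, the sum is positive definite precisely when the directions $\{\mathbf{d}_j\}$ are not all collinear, which holds whenever at least two reference points are not aligned with the center. Positive definiteness forces a unique minimizer, which is thus exactly $\mathbf{c}$.

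Second, for the orientation-and-calibration factor $\mathbf{M}=\mathbf{K}\mathbf{R}$, I would start from the cameras-to-rays relation of \cref{cam_free_theorem: c2r}, namely $\mathbf{d}_j=\mathbf{R}^T\mathbf{K}^{-1}\mathbf{u}_j$. Left-multiplying by $\mathbf{K}\mathbf{R}$ and using $\mathbf{R}\mathbf{R}^T=I$ yields $\mathbf{M}\mathbf{d}_j$ parallel to $\mathbf{u}_j$ (equal up to the projective scale inherent in the homogeneous image coordinates), whence $\mathbf{M}\mathbf{d}_j\times\mathbf{u}_j=\mathbf{0}$ and $\mathbf{M}$ achieves the zero minimum of \cref{eq: cam_rot}. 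The Frobenius-norm constraint $|\mathbf{H}|=1$ is exactly what removes the scalar freedom that the cross-product residual leaves undetermined, since $(\lambda\mathbf{M})\mathbf{d}_j\times\mathbf{u}_j=\mathbf{0}$ for every $\lambda$. Having isolated $\mathbf{M}$, I would finish by invoking the RQ-decomposition to factor it into an upper-triangular intrinsics matrix $\mathbf{K}$ and an orthogonal rotation $\mathbf{R}$; fixing the signs so that $\mathbf{K}$ has positive diagonal entries makes this factorization unique and recovers all camera parameters.

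The hard part will be the uniqueness arguments rather than the zero-residual verifications, which are immediate. For the center one must exclude the degenerate case of parallel (collinear) ray directions, and for $\mathbf{M}$ one must argue that the homogeneous pairs $\{(\mathbf{d}_j,\mathbf{u}_j)\}$ impose enough independent constraints that the only $|\mathbf{H}|=1$ optimizer is $\mathbf{M}/|\mathbf{M}|$; this is a genericity condition requiring sufficiently many, non-coplanar reference points, so I would state it as a general-position assumption. A secondary subtlety is to keep track of the projective scale factors consistently so that the cross-product residuals are genuinely scale-invariant, and to verify that the RQ sign conventions deliver a right-handed rotation together with a physically admissible intrinsics matrix.
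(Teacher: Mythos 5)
Your proposal is correct and takes essentially the same route as the paper's proof: verify that the true camera center and $\mathbf{M}=\mathbf{K}\mathbf{R}$ satisfy the Pl\"{u}cker constraints $\mathbf{m}=\mathbf{c}\times\mathbf{d}$ and $\mathbf{M}\mathbf{d}_j\times\mathbf{u}_j=\mathbf{0}$ exactly, so the true parameters are zero-residual minimizers of the two least-squares problems, with $\mathbf{K}$ and $\mathbf{R}$ then separated by RQ-decomposition. You in fact go beyond the paper's proof, which stops at this consistency check: your uniqueness arguments (positive-definite Hessian $2\sum_j\bigl(|\mathbf{d}_j|^2 I-\mathbf{d}_j\mathbf{d}_j^T\bigr)$ under non-collinear directions, and the general-position condition for $\mathbf{M}$), as well as your explicit treatment of the projective scale that makes the cross-product residual well defined and of the RQ sign conventions, are all left implicit in the paper.
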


\begin{proof}
    All rays should passing through the same point $\mathbf{c}$, which is the camera center. In order to solve the camera center, we want 
    \begin{equation}
        \mathbf{m} = \mathbf{c} \times \mathbf{d}
    \end{equation}
    for all rays $\mathbf{r} = \langle\mathbf{d}, \mathbf{m} \rangle \in \boldsymbol{\cal{R}}$. When the number of rays is greater than the the number of unknown parameters for camera center, this gives a overparameterized equation system.
    \begin{equation}
        \mathbf{c} = \arg \min_{\mathbf{p} \in \mathbb{R}^3} \sum_{\langle \mathbf{d}, \mathbf{m} \rangle \in \mathcal{R}} | \mathbf{p} \times \mathbf{d} - \mathbf{m} |^2
    \end{equation}
    As for the rotation matrix $\mathbf{K}$ and intrinsics matrix $\mathbf{K}$, denoting $\mathbf{M} = \mathbf{K}\mathbf{R}$,
    \begin{align}
        \mathbf{d} & = \mathbf{R}^T\mathbf{K}^{-1} \mathbf{u} \\
         &\Rightarrow \mathbf{K}\mathbf{R}\mathbf{d} = \mathbf{u} \\
         &\Rightarrow = \mathbf{M}\mathbf{d} = \mathbf{u}.
    \end{align}
    Ignoring the translation here, we require the direction of $\mathbf{M}\mathbf{d}$ and $\mathbf{u}$ being the same, i.e., 
    \begin{equation}
        \mathbf{M}\mathbf{d} \times \mathbf{u} = 0. 
    \end{equation}
    Similar to what we do for camera center, $\mathbf{M}$ can be solved by
    \begin{equation}
    \mathbf{M} = \arg\min_{|\mathbf{H}|=1} \sum_{i=1}^m |\mathbf{H}\mathbf{d}_j \times \mathbf{u}_j|
    \end{equation}
\end{proof}

\subsection{Details for spatially efficient self-attention} \label{app: self_attn}
Directly do self-attention on the 3D queries may cause large memory costs, particularly when $N$ is extensive, we implement a spatially efficient self-attention strategy inspired by \citep{unig}.
This method involves sampling a subset of queries $\textbf{Q}_{down}$ from  $\textbf{Q}_{3D}$ using the Fast Point Sampling (FPS) \citep{pointnet} algorithm on the locations of the queries, which are defined as the centers of 3D Gaussians. Such design focuses on 3D Gaussians positioned farthest apart and likely to encompass the entire scene. Keys and values are then derived from $\textbf{Q}_{down}$ while queries are extracted from $\textbf{Q}_{3D}$ by linear projection during self-attention computation. This approach minimizes memory consumption without compromising performance. 

\subsection{Details for 3D queries initialization} \label{app: init}
3D queries are initialized by the flattening image features with shape $V \times H \times W$, where $V$, $H$, and $W$ represent the number of input views, height, and width of each view, respectively.
Subsequently, the 3D queries are passed through $\mathrm{Head}_{3D}$ to initialize 3D Gaussians.

In cases where the resolution or the number of input views are large, resulting in large $V \times H \times W$, memory conservation becomes crucial. To address this, we introduce a hyperparameter $N_{3D}$ to denote the number of 3D queries. To select $N_{3D}$ queries from the pool of $V \times H \times W$ queries, we employ FPS (Farthest Point Sampling) \citep{pointnet} on the centers of the initial 3D Gaussians. Given the one-to-one correspondence between the 3D Gaussians and 3D queries, the 3D queries are subsequently downsampled based on the chosen 3D Gaussians.

Note that the number of camera queries can be much smaller than 3D queries due to the sparser nature of camera parameters requirements, we subsample a portion of the 2D reference points using FPS \citep{pointnet} based on the centroids of 3D Gaussians to obtain the reference points for camera parameters queries. For simplicity, we ignore the downsampling in the main paper. In the experiment, we use 256 queries for camera parameters in each view while 131,072 queries for 3D Gaussians.

\subsection{Details for deformable attention} \label{app: model_details}
\begin{figure}
    \centering
    \includegraphics[width=0.36\textwidth]{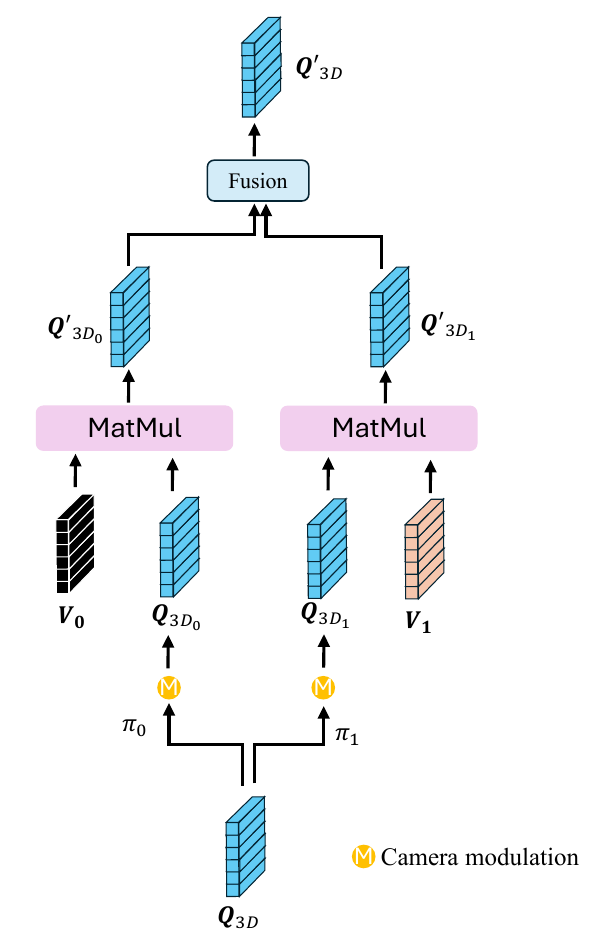}
    \caption{Query updation for CAMVDFA. The updating of queries for 3D Gaussians follows a similar process to that of camera queries. In this context, 'MatMul' signifies matrix multiplication. $\mathbf{Q}{3D_i}$ stands for the 3D queries for the $i$-th input view, while $\mathbf{Q}'{3D_i}$ indicates the updated 3D queries for the same view. Camera modulation has details in \cref{sec: dfa}. Further insights into the fusion block, which combines the view-wise queries.}
    \label{fig: dfa_query_update}
\end{figure}

After defining the reference points $\mathbf{P}_i$, image features, $\textbf{F}_i$, and given the 3D queries $\textbf{Q}_{3D_i}$ for the $i$-th view, we do DFA on them to refine the queries.
To sample the image features surrounding the reference points, trainable sampling offsets $\Delta \mathbf{s}=\mathrm{MLP}(\mathbf{q})$ are calculated, where $\mathrm{MLP}$ represents the multilayer perceptron layer comprising two linear layers and two activation functions. After that, the image features on the sampling points $\mathbf{s} = \mathbf{P}_i + \Delta \mathbf{s}$ are sampled by grid sampling algorithm with bilinear interpolation, serving as the values $\mathbf{v}$ for cross-attention. Then the attention score $a_{3D_i} = \mathrm{MLP}(\textbf{Q}_{3D_i})$ is computed for the sampled image features at $\mathbf{s}$. Finally, $\textbf{Q}_{3D_i}$ is refined by dot product between the attention score and values. Consistent with \citep{dino, DETR}, we derive attention scores directly from queries, omitting keys to simplify computations. The updating process of queries is shown in \cref{fig: dfa_query_update}.
\subsection{More results}
\paragraph{Pose estimation performance}

\begin{table}[!htp]
\centering
\caption{Pose estimation performance in rotation error (e\_rot) and translation error (e\_trans) on RE10k, ACID. Methods with `*' meaning the model is trained on RE10k+DL3DV dataset, otherwise only on RE10K. `-A' means with evaluation-time pose alignment.}
\label{tab:pose_error}

\begin{tabular}{lcccc}
\toprule
 & \multicolumn{2}{c}{RE10k} & \multicolumn{2}{c}{ACID}\\
\cmidrule(lr){2-3} \cmidrule(lr){4-5}
Method & e\_rot $\downarrow$ & e\_trans $\downarrow$ & e\_rot $\downarrow$ & e\_trans $\downarrow$ \\
\midrule
NoPoSplat & 4.258 & 2.879 & 11.269 & 3.296 \\
NoPoSplat* & 3.399 & 1.806 & 9.835 & 3.827 \\
NoPoSplat-A & 3.860 & 2.953 & 10.507 & 3.423 \\
NoPoSplat-A* & 2.948 & 1.912 & 10.448 & 3.488 \\
Ours & 3.704 & 2.295 & 11.087 & 3.192 \\
Ours* & 3.164 & 1.746 & 9.797 & 2.988 \\
Ours-A & 3.295 & 2.704 & 10.191 & 3.053 \\
Ours-A* & 2.746 & 1.838 & 9.815 & 3.234 \\
\bottomrule
\end{tabular}

\end{table}

In addition to the pose estimation evaluated by AUC, we also provide the pose estimation performance in rotation error (e\_rot) and translation error (e\_trans) on RE10k, ACID in \cref{tab:pose_error}.

\paragraph{Performance on the evaluation set of pixelSplat}
\begin{table}[!htp]
\centering
\caption{Performance comparison on the evaluation set of pixelSplat for re10k dataset.}
\label{tab:pixelsplat_index}

\begin{tabular}{lcccc}
\toprule
 Method & PSNR $\uparrow$ & SSIM $\uparrow$ & LPIPS  \\
\midrule
pixelNeRF & 20.43 & 0.589 & 0.55 \\
GPNR & 24.11 & 0.793 & 0.255 \\
AttnRend & 24.78 & 0.82 & 0.213\\
pixelSplat & 26.09 & 0.863 & 0.136 \\
MVSplat & 26.39 & 0.869 & 0.128 \\
NoPoSplat-A & 26.786 & 0.878 & 0.124 \\
Ours & 25.837 & 0.852 & 0.129 \\
Ours-A & 27.214 & 0.891 & 0.118\\
\bottomrule
\end{tabular}
\end{table}

In order to conduct a fair comparison between our methods and the previous MVSplat benchmark, we evaluated our model using the indices provided in pixelSplat and MVSplat \citep{pixelsplat, mvsplat}. As depicted in \cref{tab:pixelsplat_index}, our model continues to outperform previous methodologies on the pixelSplat index.

% \paragraph{Performance on object dataset}
% \begin{table}[!htp]
% \centering
% \caption{Performance comparison on GSO dataset}
% \label{tab:gso}
% \begin{tabular}{lcccc}
% \toprule
%  Method & PSNR $\uparrow$ & SSIM $\uparrow$ & LPIPS  \\
% \midrule
% LEAP & 20.559 & 0.853 & 0.144\\
% NoPoSplat-A & 28.378 & 0.935 & 0.053\\
% Ours-A \\
% \bottomrule
% \end{tabular}
% \end{table}
\paragraph{More number of input views} \label{app: more_views}
\begin{figure}
    \centering
    \includegraphics[width=0.45\textwidth]{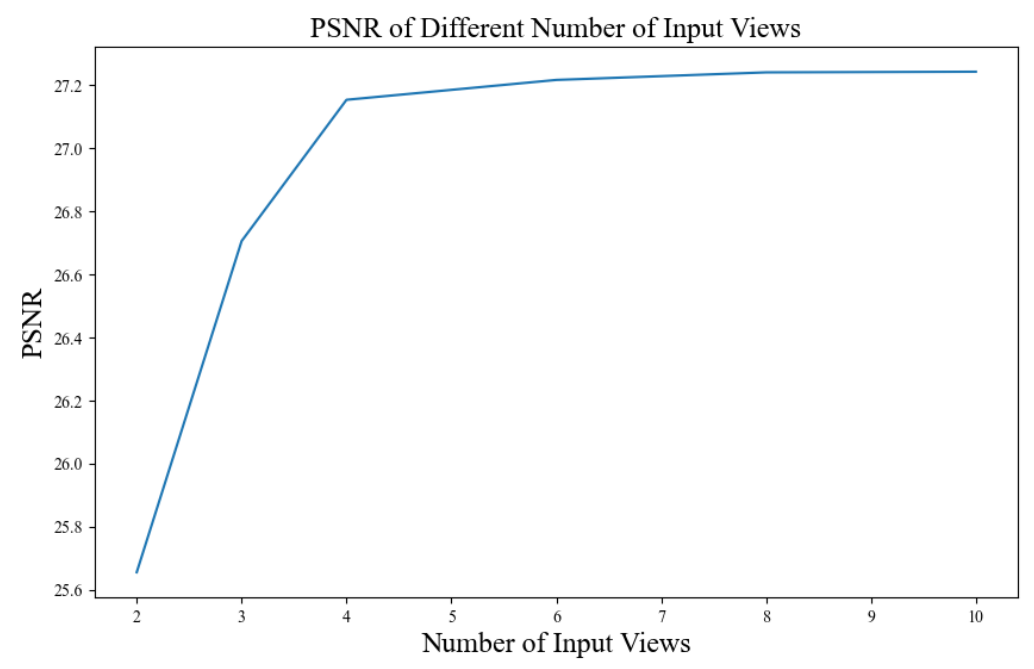}
    \caption{PSNR v.s. number of input views.}
    \label{fig: more_views}
\end{figure}
Our model is flexible and can receive an arbitrary number of input views, surpassing the limitations of only considering two views. We present the PSNR results for a larger number of input views, extending up to 10 views, in \cref{fig: more_views}. Generally, more input views give better performance on scene reconstruction.

\paragraph{Number of 3D Gaussians} \label{app: num_gaussian}
\begin{figure}
    \centering
    \includegraphics[width=0.45\textwidth]{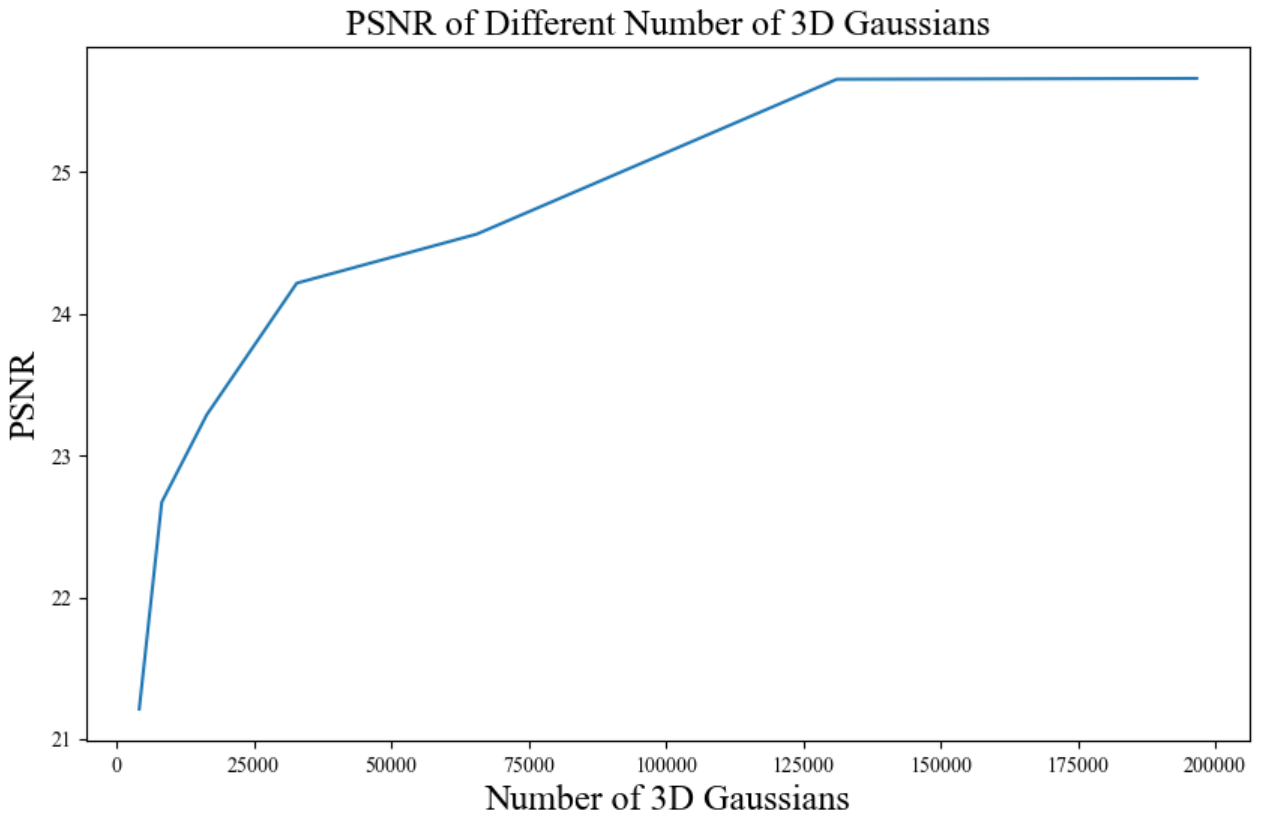}
    \caption{PSNR v.s. number of 3D Gaussians.}
    \label{fig: num_gaussian}
\end{figure}

We show the PSNR v.s. the number of 3D Gaussians in \cref{fig: num_gaussian}.

\paragraph{More visualization} \label{app: more_vis}

We provide more novel view synthesis results on RE10K and ACID with different groups of overlapping in this section \cref{fig: re10k_large}, \cref{fig: re10k_medium}, \cref{fig: re10k_small}, \cref{fig: acid_large}, \cref{fig: acid_medium}, \cref{fig: acid_small}. 

\begin{figure*}
    \centering
    \includegraphics[width=0.95\textwidth]{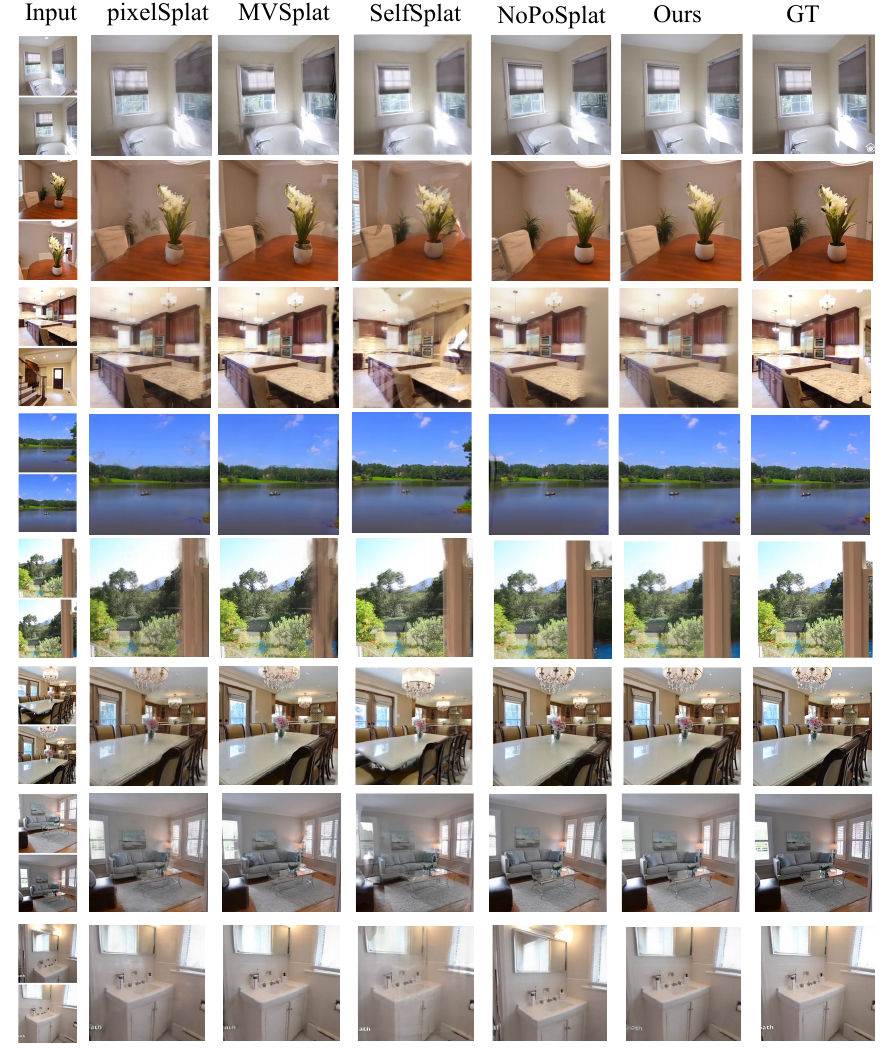}
    \caption{More comparisons of the RealEstate10K dataset with large overlap of input images.}
    \label{fig: re10k_large}
\end{figure*}

\begin{figure*}
    \centering
    \includegraphics[width=0.95\textwidth]{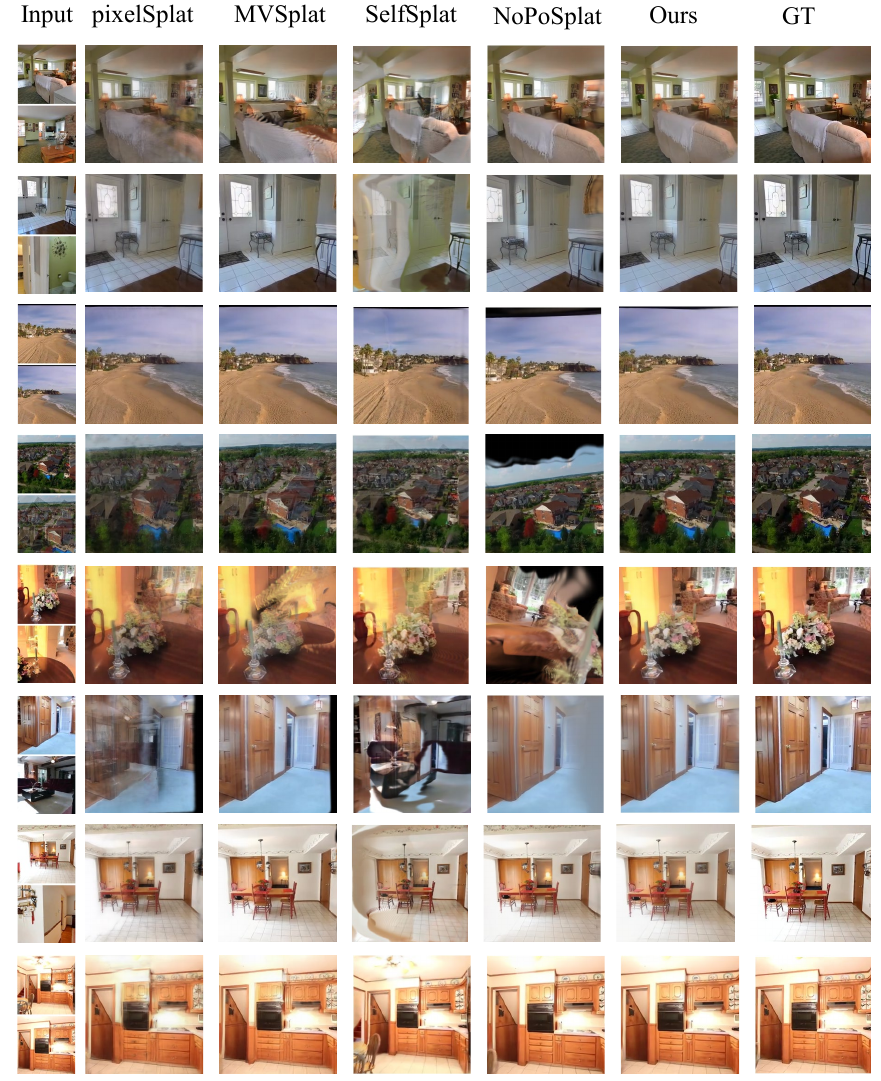}
    \caption{More comparisons of the RealEstate10K dataset with medium overlap of input images.}
    \label{fig: re10k_medium}
\end{figure*}

\begin{figure*}
    \centering
    \includegraphics[width=0.95\textwidth]{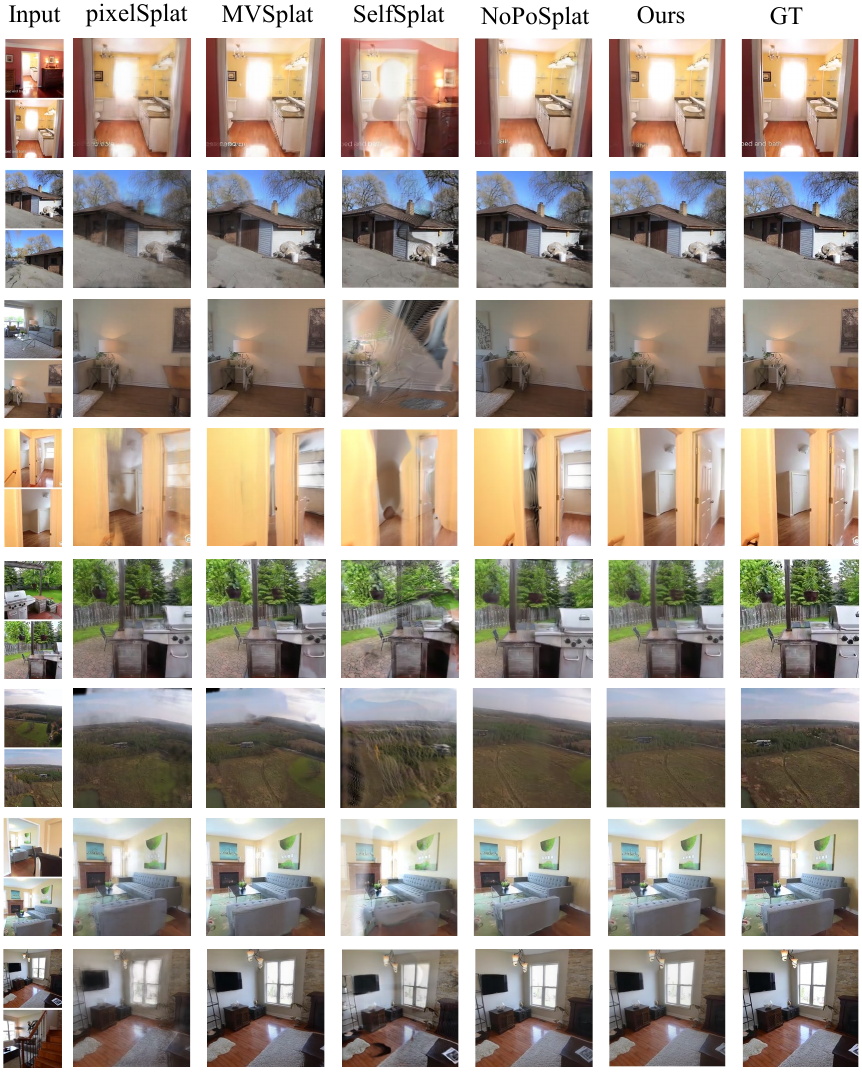}
    \caption{More comparisons of the RealEstate10K dataset with small overlap of input images.}
    \label{fig: re10k_small}
\end{figure*}

\begin{figure*}
    \centering
    \includegraphics[width=0.95\textwidth]{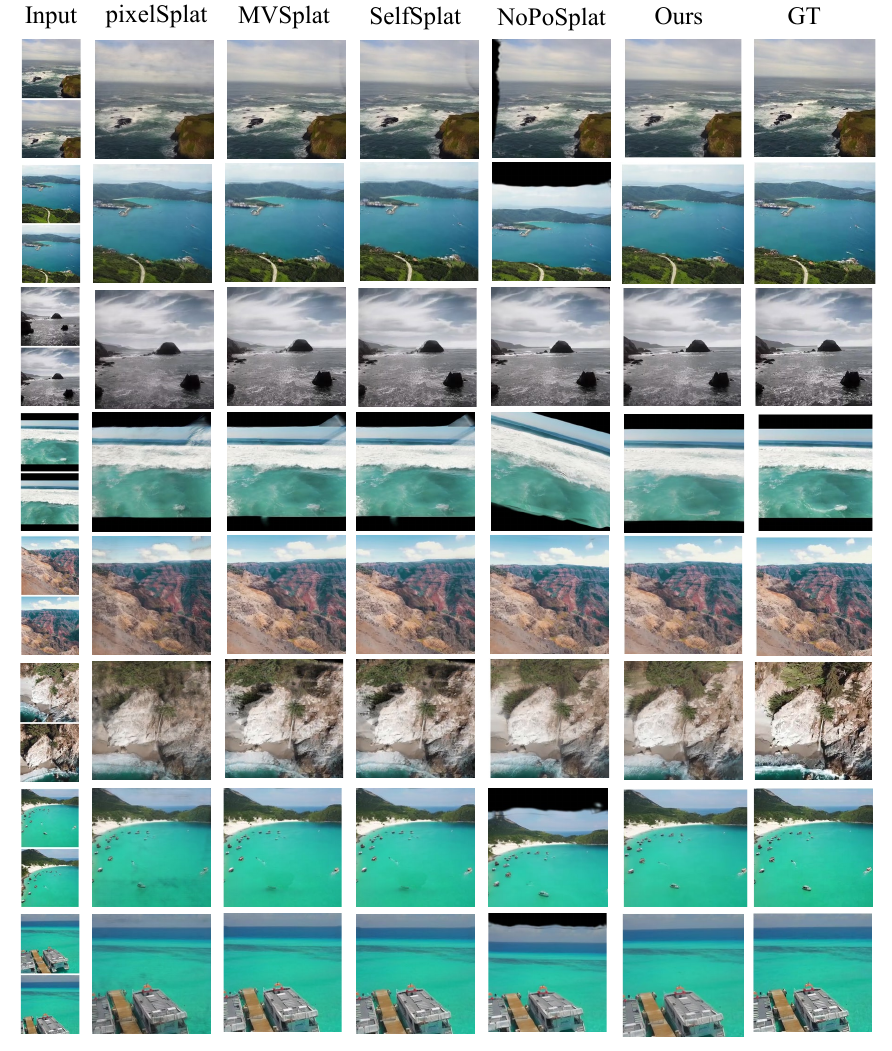}
    \caption{More comparisons of the ACID dataset with large overlap of input images.}
    \label{fig: acid_large}
\end{figure*}

\begin{figure*}
    \centering
    \includegraphics[width=0.95\textwidth]{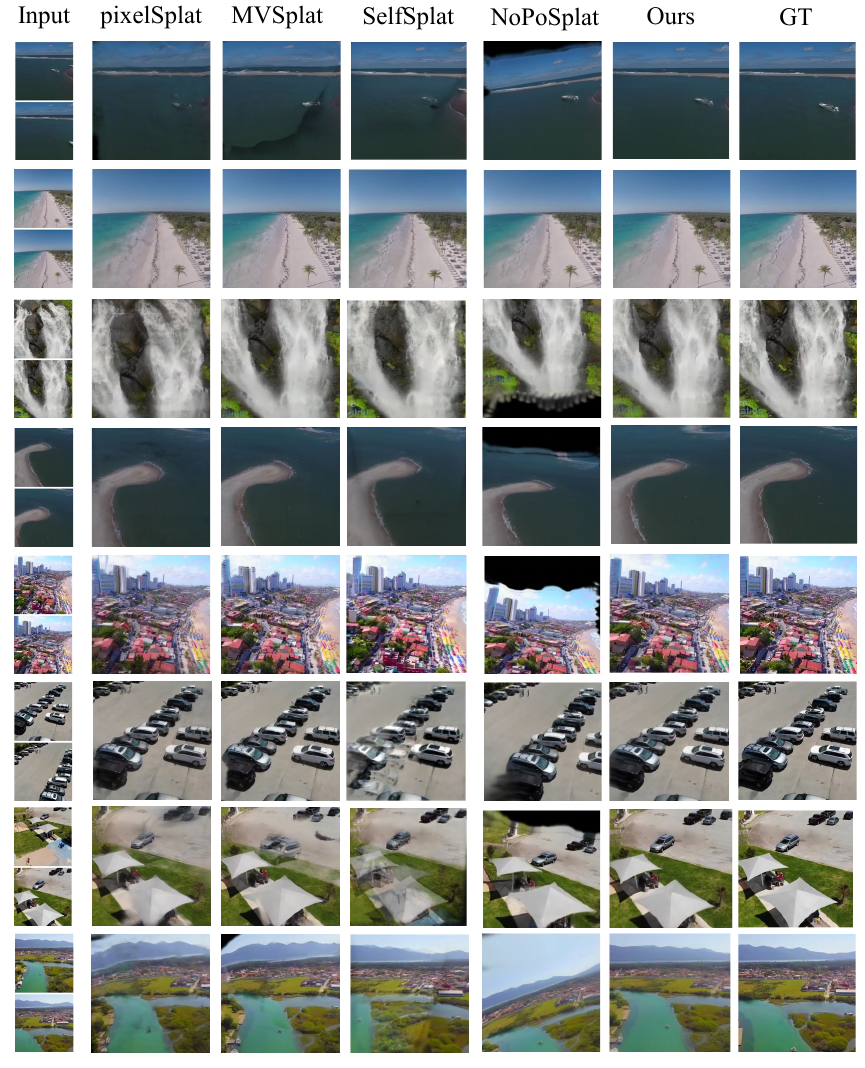}
    \caption{More comparisons of the ACID dataset with medium overlap of input images.}
    \label{fig: acid_medium}
\end{figure*}

\begin{figure*}
    \centering
    \includegraphics[width=0.95\textwidth]{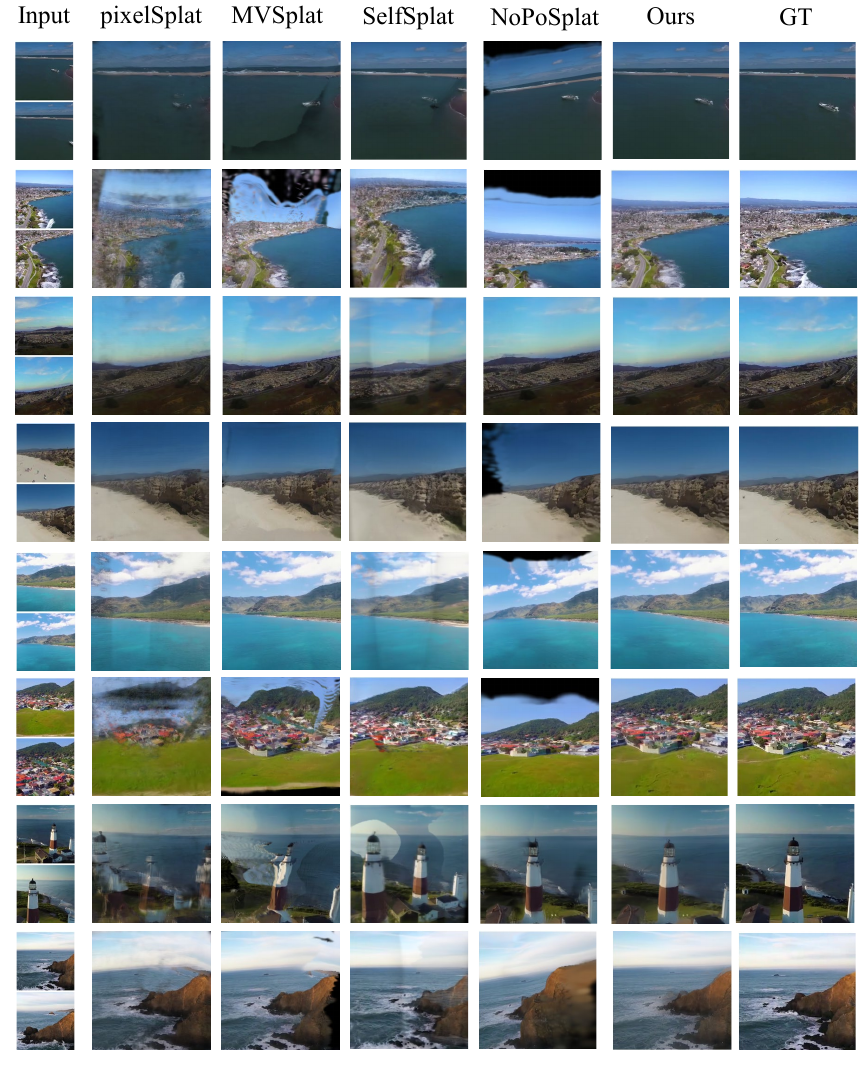}
    \caption{More comparisons of the RealEstate10K dataset with small overlap of input images.}
    \label{fig: acid_small}
\end{figure*}
\end{document}